\journal{Pattern Recognition}
\begin{document}

\title{BPE: Behavioral Profiling Ensemble}  
\author[1]{Yanxin Liu}
\ead{12024213042@stu.ynu.edu.cn}

\author[1]{Yunqi Zhang}
\ead{yunqizhang@ynu.edu.cn}

\affiliation[1]{organization={Yunnan University},
 }

\begin{abstract}
In the field of machine learning, ensemble learning is widely recognized as a pivotal strategy for pushing the boundaries of predictive performance. Traditional static ensemble methods typically assign weights by treating each base learner as a whole, thereby overlooking that individual models exhibit varying competence across different regions of the instance space. Dynamic Ensemble Selection (DES) was introduced to address this limitation. However, both static and dynamic approaches predominantly rely on inter-model differences as the basis for integration; this inter-model perspective neglects models' intrinsic characteristics and often requires heavy reliance on reference sets for competence estimation. We propose the Behavioral Profiling Ensemble (BPE) framework, which introduces a model-centric integration paradigm. Unlike traditional methods, BPE constructs an intrinsic behavioral profile $\mathcal{P}_k$ for each model and derives aggregation weights from the deviation between a model's response to a test instance and its established profile; in this work, we instantiate $\mathcal{P}_k$ with entropy-based summary statistics (e.g., mean and variance). Extensive experiments on 42 real-world datasets show that BPE-derived algorithms outperform state-of-the-art DES baselines, increasing predictive accuracy while reducing computational and storage overhead.
\end{abstract}

\begin{keyword}
Ensemble Learning, Behavioral Profiling, Dynamic Ensemble Selection
\end{keyword}

\maketitle
\newtheorem{definition}{Definition}
\newtheorem{theorem}{Theorem}
\newtheorem{lemma}{Lemma}      
\newtheorem{assumption}{Assumption} 
\newtheorem{corollary}{Corollary}  
\newtheorem{proof}{Proof}   
\section{Introduction}\label{1}

In the field of machine learning, ensemble learning improves predictive performance by combining multiple base learners and is widely used for complex classification problems. Whether in data science competitions or high-stakes industrial applications such as medical diagnosis and financial risk management \cite{vanderlaan2007super, timmermann2006forecast}, ensemble models typically exhibit superior generalization capability and robustness compared to individual models by exploiting diversity among learners \cite{krogh1995neural, dietterich2000ensemble}. For mainstream probability-based ensemble integration, strategies are generally categorized into two paradigms: global static ensembles, such as simple averaging, median averaging, or rank averaging \cite{kittler1998combining, ho1994decision}; and meta-learning-based methods, such as Stacking, which learn the combination weights of base models by training a meta-classifier \cite{wolpert1992stacked}.

Despite their substantial success, many global fusion rules adopt a single combination rule that is applied across the entire data distribution. However, in practical scenarios, different models often manifest distinct strengths in different regions of the instance space. A global integration perspective can therefore overlook variations in local competence. To address this issue, Dynamic Ensemble Selection (DES) and Dynamic Classifier Selection (DCS) methods have emerged \cite{cruz2018dynamic}. Many DES/DCS methods follow a neighborhood-based validation paradigm: during the inference phase, they retrieve similar samples from a reference set and dynamically assign weights based on the models' historical performance on these local neighbors \cite{woods1997combination, ko2008dynamic}. Nevertheless, such reliance on an external reference set poses challenges in deployment and may limit robustness in certain contexts. As a result, in practice (e.g., data science competitions and real-world deployments), static ensembles, especially simple probability averaging/weighting, remain more prevalent due to their simplicity and ease of deployment.

To overcome these limitations, we propose a shift in perspective: moving from external reference matching to leveraging models' intrinsic properties. This paradigm shift can be understood through an intuitive analogy. Traditional DES/DCS algorithms resemble ``resume screening'': when a new task (test sample) arrives, the system reviews historical records to find individuals who performed well on similar past tasks (reference neighbors). The drawback is that for novel or outlier instances, such historical records may become less informative. In contrast, our method functions as ``behavioral profiling'': rather than asking whether the individual has encountered similar tasks before, we observe inherent behavioral traits through lightweight stress testing to assign higher weights to the most reliable models.

Driven by this motivation, this paper proposes the Behavioral Profiling Ensemble (BPE), a validation-free dynamic ensemble framework based on model ``personality'' profiling. Here, validation-free means that BPE does not require a held-out validation split to learn aggregation weights, and it avoids reference-set-based neighbor retrieval at inference time. Specifically, we avoid explicit neighborhood search on a reference set and instead introduce stochastic perturbations in the feature space to construct an instantaneous ``testing field'' for each model. By observing the variations in the output distributions of different base models under perturbation (e.g., entropy variation), we can characterize the models' inherent behavioral characteristics \cite{gal2016dropout, lakshminarayanan2017simple}. This profiling effectively differentiates between models that are robustly confident and those that exhibit high uncertainty on the current sample, thereby serving as the basis for dynamic weighting. Extensive experiments on both simulation and real-world datasets demonstrate that BPE consistently outperforms traditional methods across numerous scenarios.

The main contributions of this paper are summarized as follows:
\begin{itemize}
    \item \textbf{A new ensemble learning framework:} This paper proposes a new ensemble learning framework, BPE, which is distinct from traditional DES/DCS. Specifically, BPE shifts the integration perspective from inter-model comparison to a model-centric one. It further introduces the notion of a model-specific behavioral profile and a deviation-based weighting principle that evaluates each model relative to itself.
    \item \textbf{Key building blocks and an effective instantiation:} This paper presents the basic building blocks of the BPE framework and proposes a simple instantiation, BPE-Entropy, which improves accuracy over traditional reference-set-based DES/DCS methods in many scenarios while reducing deployment-relevant storage and computational overhead.
\end{itemize}

The remainder of this paper is organized as follows: Section 2 reviews related work in ensemble learning. Section 3 introduces the BPE framework and an entropy-based implementation algorithm. Section 4 presents comprehensive simulation experiments to verify the effectiveness of the algorithm. Section 5 evaluates the performance using a wide range of real-world datasets. Finally, Section 6 discusses future directions for the BPE framework.

\section{Related Work}\label{2}

\subsection{Ensemble Learning Theory and Uncertainty Measures}
The core philosophy of ensemble learning lies in compensating for the inductive bias of a single learner by combining multiple hypotheses. The ``No Free Lunch'' theorems proposed by Wolpert and Macready \cite{wolpert1997no} state that no single learning algorithm can perform optimally across all possible data distributions, thereby motivating the use of diverse hypotheses in practice. To quantify the effectiveness of ensembles, Krogh and Vedelsby proposed the well-known ``Error-Ambiguity Decomposition'' theory \cite{krogh1995neural}; in their formulation, the ensemble generalization error can be expressed as the average error of the base models minus an ambiguity term that captures model diversity. This theory establishes the dual objectives of ensemble learning: maximizing diversity among models while pursuing individual accuracy. From a statistical perspective, Dietterich further noted that ensemble methods can smooth decision boundaries by averaging multiple hypotheses and often reduce variance compared to individual models \cite{dietterich2000ensemble}.

To accurately allocate weights in dynamic ensembles, it is essential to quantify a model’s sample-wise predictive uncertainty. Information entropy, as introduced by Shannon \cite{shannon1948mathematical}, is a fundamental measure of uncertainty for probability distributions and is widely used as a convenient proxy for confidence; however, low entropy does not necessarily imply correctness when models are miscalibrated. Grandvalet and Bengio showed in semi-supervised learning that entropy minimization can encourage confident predictions on unlabeled data \cite{grandvalet2004semi}. In modern machine learning, perturbation- and sampling-based approaches to uncertainty estimation have become mainstream. Deep Ensembles \cite{lakshminarayanan2017simple} demonstrated that epistemic uncertainty can be estimated by training and combining multiple independently initialized models. Similarly, Gal and Ghahramani showed that enabling dropout at inference time provides a Bayesian approximation \cite{gal2016dropout}; by aggregating multiple stochastic forward passes, one can estimate predictive uncertainty (e.g., via predictive entropy). Furthermore, Schapire’s margin theory suggests that improving classification margins—often reflected by larger gaps between top predicted probabilities—can benefit generalization \cite{schapire1998boosting}. Taken together, these studies motivate our BPE framework, which constructs a model-specific behavioral baseline from perturbation-induced variations in its predictive distribution and performs dynamic aggregation based on the magnitude of a model’s deviation from this baseline.

\subsection{Static Ensembles}
Early ensemble strategies predominantly adopted a static perspective, where the aggregation weights of base models remain fixed across the entire feature space. Depending on the generation mechanism of the base learners, these strategies are primarily categorized into homogeneous and heterogeneous ensembles; in this paper, we use ``homogeneous'' to refer to ensembles that generate base learners from a shared underlying model family (often decision trees) via a common mechanism (e.g., Bagging or Boosting).

In homogeneous ensembles, the Bagging algorithm proposed by Breiman \cite{breiman1996bagging} constructs diversity via bootstrap sampling; its prominent representative, Random Forests \cite{breiman2001random}, primarily enhances performance by mitigating variance. Conversely, Boosting paradigms aim to reduce bias by iteratively correcting residuals. Notable examples include the classic AdaBoost \cite{freund1997decision} and Gradient Boosting Machines (GBM) \cite{friedman2001greedy}, along with highly optimized implementations such as XGBoost \cite{chen2016xgboost}, LightGBM \cite{ke2017lightgbm}, and CatBoost \cite{prokhorenkova2018catboost}.

In heterogeneous ensembles that combine diverse algorithms (e.g., SVC \cite{cortes1995support}, KNN \cite{cover1967nearest}, and Logistic Regression \cite{cox1958regression}), the prevailing strategy involves static fusion rules. Kittler et al. systematized fundamental combination rules, including majority voting and simple averaging \cite{kittler1998combining}. To address the limited comparability of probability estimates across models (e.g., due to differing calibration), rank-based methods, such as the Borda Count applied by Ho et al. \cite{ho1994decision}, mitigate these differences through rank aggregation. Furthermore, the Stacking framework \cite{wolpert1992stacked} employs a meta-learner to learn a global (instance-invariant) combination of base models. Despite their widespread success, these static methods inherently overlook the locality of data distributions; a model with suboptimal global performance may exhibit superior competence within specific local subspaces.

\subsection{Dynamic Ensembles}
To address the rigidity of static ensembles, Dynamic Ensemble Selection and Dynamic Classifier Selection (DES/DCS) have been extensively studied \cite{cruz2018dynamic}. Their core philosophy is to estimate the competence of base classifiers for each test sample and then perform dynamic selection and/or weighting accordingly. Existing DES/DCS methods primarily evaluate models based on a ``Region of Competence'' (RoC), which is typically constructed from a reference set. Approaches predicated on local accuracy, such as LCA proposed by Woods et al. \cite{woods1997combination}, assess models by computing class-conditional accuracy within a local neighborhood. The KNORA series \cite{ko2008dynamic} adopts a more direct oracle-based strategy, with different variants determining model participation based on whether neighbors in the RoC are correctly classified. To circumvent the limitations of hard thresholds, RRC \cite{woloszynski2011probabilistic} utilizes Gaussian potential functions to assign weights based on the distance between the test sample and its reference neighbors, achieving a probabilistic competence assessment. MCB \cite{giacinto2001design} introduces the concept of a behavior knowledge space to measure reliability by comparing the consistency of classifier output profiles in local regions. Meta-learning-based methods, such as META-DES \cite{cruz2015meta}, transform dynamic selection into a binary classification problem, utilizing meta-features to train a meta-classifier for competence estimation. Furthermore, FIRE-DES++ \cite{cruz2019fire} incorporates online pruning strategies to enhance computational efficiency.

Despite their theoretical sophistication, these DES/DCS methods encounter significant challenges regarding their dependency on external reference sets for practical deployment. Fundamentally, existing DES methods adhere to a retrieval-based paradigm: the system is required to retain a (potentially large) reference set and execute computationally expensive nearest-neighbor searches for each test sample during inference. As data scales expand, time and space complexity typically increase, imposing non-trivial overhead. Moreover, in high-dimensional feature spaces, as noted by Beyer et al. \cite{Beyer1999NN}, the discriminative power of Euclidean distance can diminish (the ``curse of dimensionality''), resulting in retrieved ``neighbors'' that may be spurious and less informative for competence assessment. In summary, while DES offers a perspective of dynamic adaptation, its structural reliance on external reference sets and neighborhood retrieval constrains its scalability to larger and higher-dimensional datasets. This limitation motivates our exploration of a novel ensemble paradigm that achieves efficient and robust dynamic integration by constructing intrinsic behavioral profiles of the models. This perspective provides a new ensemble direction: rather than relying on an external reference set or absolute statistics, we use the deviation between a model’s response and its own behavioral baseline.

\section{The core intuition of BPE}\label{3}
In this section, we will provide some simple proofs to derive the core intuition of BPE.
\begin{definition}[Models and Notations]
Let $S$ be the set of all samples in a classification task with output space $V = \{1, 2, \dots, |V|\}$. We consider an ensemble of two models sharing the same input and output spaces:
\begin{itemize}
    \item Let $q$ denote the \textbf{Primary Model (PM)}.
    \item Let $q'$ denote the \textbf{Secondary Model (SM)}.
\end{itemize}
For any sample $s \in S$, let $q_{s}$ and $q'_{s}$ be the output probability vectors of the Primary and Secondary models, respectively. Let $q_{s,i}$ denote the probability assigned to class $i$ by model $q$, and $k_s \in V$ denote the true class label of sample $s$.
\end{definition}

\begin{definition}[Partition of Sample Space]
Based on the correctness of the predictions from the Primary and Secondary models, the entire sample set $S$ is partitioned into three mutually disjoint subsets ($S = T \cup F \cup N$):
\begin{itemize}
    \item \textbf{$T$:} Samples where the Primary Model predicts incorrectly, but the Secondary Model predicts correctly.
    $$T = \{s \in S \mid \arg\max_{i \in V}(q_{s,i}) \neq k_s \land \arg\max_{i \in V}(q'_{s,i}) = k_s\}$$
    \item \textbf{$F$:} Samples where the Primary Model predicts correctly, but the Secondary Model predicts incorrectly.
    $$F = \{s \in S \mid \arg\max_{i \in V}(q_{s,i}) = k_s \land \arg\max_{i \in V}(q'_{s,i}) \neq k_s\}$$
    \item \textbf{$N$:} Samples where both models predict correctly, or both predict incorrectly.
    $$N = \{s \in S \mid I(\arg\max q_s = k_s) = I(\arg\max q'_s = k_s)\}$$
\end{itemize}
\end{definition}

\begin{definition}[Theoretical Optimality of Ensemble]
Let $Q_{ens}$ be the fused predictor derived from the ensemble of $q$ and $q'$. We define the \textbf{Theoretical Optimality} of the system as the state where the ensemble correctly classifies all samples that are correctly classified by at least one of the individual models, as well as the mathematically correctable subset $N_{corr} \subset N$ where both models predict incorrectly.

Formally, the ensemble achieves Theoretical Optimality if and only if:
$$ \forall s \in T \cup F \cup N_{corr}, \quad \arg\max_{i \in V}(Q_{ens, s, i}) = k_s $$
This implies that the classification accuracy on the union set $T \cup F \cup N_{corr}$ is $100\%$.
\end{definition}

\begin{lemma}[Exchange Condition]\label{lemma:exchange_condition}
Let the fused model be $q_{new,s,i} = w \cdot q_{s,i} + (1-w) \cdot q'_{s,i}$ for $w \in (0,1)$. On sample $s$, if the PM's original prediction is $i$ (i.e., $\arg\max(q_s) = i$), for the fused model's prediction for class $j$ to have a higher probability than for class $i$, it must satisfy:
$$w \cdot q_{s,j} + (1-w) \cdot q'_{s,j} > w \cdot q_{s,i} + (1-w) \cdot q'_{s,i}$$

For $w \in (0,1)$, this inequality is equivalent to:
$$\frac{q_{s,i} - q_{s,j}}{q'_{s,j} - q'_{s,i}} < \frac{1-w}{w}$$

Clearly, we need SM's prediction for $j$ to be greater than its prediction for $i$:
$$q'_{s,j} - q'_{s,i} > 0$$
This is termed the \textbf{exchange condition}.

Furthermore, if $\arg\max(q_{new,s}) = j$, then the fused model's prediction changes from $i$ to $j$. This is termed the \textbf{strict exchange condition} if the exchange condition is also met.

We define the left side as the Exchange Threshold, denoted $ET(s, i \to j)$. A flip occurs only when $ET < \frac{1-w}{w}$.

This lemma implies that only when the exchange condition is met can the fused model's prediction potentially differ from the PM's prediction. Any case not satisfying this condition cannot change the prediction outcome. Moreover, only when the strict exchange condition is met can the change in prediction outcome be determined.
\end{lemma}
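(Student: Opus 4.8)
The plan is to treat this as a direct algebraic derivation, with the only genuine subtlety being the sign condition that controls the direction of an inequality under division. First I would start from the flip inequality $w q_{s,j} + (1-w) q'_{s,j} > w q_{s,i} + (1-w) q'_{s,i}$ and collect the PM terms on one side and the SM terms on the other, obtaining the rearranged form $w(q_{s,i} - q_{s,j}) < (1-w)(q'_{s,j} - q'_{s,i})$. At this point I would invoke the hypothesis $\arg\max(q_s) = i$, which guarantees $q_{s,i} - q_{s,j} \ge 0$, so the left-hand side is nonnegative; this sign information is what makes the subsequent step meaningful.

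The key step is dividing both sides by $w(q'_{s,j} - q'_{s,i})$ to reach the threshold form $\frac{q_{s,i} - q_{s,j}}{q'_{s,j} - q'_{s,i}} < \frac{1-w}{w}$. Since $w \in (0,1)$ we have $w > 0$, so the sign of the divisor is entirely governed by $q'_{s,j} - q'_{s,i}$. This is exactly where the exchange condition enters, and I would split into two cases accordingly. When $q'_{s,j} - q'_{s,i} > 0$, division by this positive quantity preserves the inequality direction and yields the claimed equivalence immediately.

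For the complementary case $q'_{s,j} - q'_{s,i} \le 0$, I would argue that no flip is possible at all: the rearranged inequality would require a nonnegative left-hand side to be strictly smaller than a nonpositive right-hand side, which is impossible. This simultaneously establishes the necessity of the exchange condition for the fused prediction to deviate from the PM's, matching the lemma's closing remark that any case violating the exchange condition cannot alter the prediction outcome.

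Finally, for the \emph{strict exchange condition} I would note that the equivalence above concerns only the pairwise comparison between classes $j$ and $i$; to certify $\arg\max(q_{new,s}) = j$ one must additionally verify that class $j$ dominates every other class under $q_{new,s}$, not merely class $i$. Since this is a labeling layered on top of the pairwise result rather than a separate computation, it requires no further work beyond the definition. I expect the only place demanding care is the sign bookkeeping in the division step, where failing to track the sign of $q'_{s,j} - q'_{s,i}$ would silently reverse the inequality; everything else is routine linear rearrangement.
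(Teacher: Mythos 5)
Your proposal is correct and follows essentially the same algebraic route that the paper embeds directly in the lemma's statement (rearrange, observe $q_{s,i}-q_{s,j}\ge 0$ from $\arg\max(q_s)=i$, divide by the positive quantity $w(q'_{s,j}-q'_{s,i})$). Your explicit case split on the sign of $q'_{s,j}-q'_{s,i}$ is in fact slightly more careful than the paper's terse ``Clearly, we need\ldots,'' since the stated equivalence only holds under that sign condition; nothing is missing.
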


\begin{assumption}[Overlap of Prediction Margins]
\label{ass:margin_overlap}
Typically, a model exhibits greater discrimination for samples it predicts correctly. However, to analyze the limits of static integration, we assume situations exist where this stratification is violated.

Specifically, we assume there exists at least one pair of samples, a correctable sample $t \in T$ and a vulnerable sample $f \in F$, such that the difference between the probability of the Primary Model's incorrectly predicted class and the true class for $t$ is \textbf{greater than or equal to} the difference between the probability of the correctly predicted class and the second most likely class for $f$.

Formally, there exists at least one sample $t \in T$ such that for some sample $f \in F$:
$$ (q_{t,i_t} - q_{t,k_t}) \ge (q_{f,k_f} - \max_{j \neq k_f} q_{f,j}) $$
where $i_t = \arg\max(q_t)$ is the Primary Model's incorrect prediction for $t$, $k_t$ is the true class of $t$, and $k_f = \arg\max(q_f)$ is the correct prediction for $f$.

A similar condition holds for the Secondary Model $q'$. Specifically, for the vulnerable sample $f \in F$ and the correctable sample $t \in T$, we assume:
$$ (q'_{f,j_f} - q'_{f,k_f}) \ge (q'_{t,k_t} - \max_{j \neq k_t} q'_{t,j}) $$
where $j_f = \arg\max(q'_f)$ is the Secondary Model's incorrect prediction for $f$, $k_f$ is the true class of $f$, and $k_t = \arg\max(q'_t)$ is the correct prediction for $t$.

\end{assumption}

\begin{theorem}[Impossibility of Static Perfection]
\label{thm:impossibility}
Consider a static linear ensemble system defined by $q_{new} = w \cdot q + (1-w) \cdot q'$ with weight $w \in (0,1)$. Under Assumption \ref{ass:margin_overlap}, where the Primary Model's confidence in its errors is greater than or equal to its confidence in its correct predictions for certain samples (specifically, $\exists t \in T, f \in F$ such that $ET(t) \ge ET(f)$), there exists \textbf{no single static weight} $w$ that allows the ensemble to achieve Theoretical Optimality.
\end{theorem}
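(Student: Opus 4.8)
The plan is to argue by contradiction, turning the geometric notion of ``a prediction flips'' into a one-dimensional comparison via Lemma~\ref{lemma:exchange_condition} and then showing that the pair $(t,f)$ supplied by Assumption~\ref{ass:margin_overlap} pins the free weight to an empty interval. Since $\frac{1-w}{w}$ sweeps all of $(0,\infty)$ as $w$ ranges over $(0,1)$, it is convenient to set $\tau := \frac{1-w}{w}$ and reason entirely in terms of the single parameter $\tau > 0$; ruling out all admissible $\tau$ then rules out all $w \in (0,1)$. Throughout I abbreviate $ET(t) := ET(t, i_t \to k_t)$ and $ET(f) := ET(f, k_f \to j_f)$ so that the theorem's shorthand matches the lemma's notation.

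First I would translate correctness on the correctable sample $t$. Here the Primary Model predicts $i_t \neq k_t$ while the Secondary Model predicts $k_t$, so $q'_{t,k_t} - q'_{t,i_t} > 0$ and the exchange condition of Lemma~\ref{lemma:exchange_condition} holds for the flip $i_t \to k_t$. For $q_{new}$ to classify $t$ correctly, class $k_t$ must be its argmax, which in particular forces $q_{new,t,k_t} > q_{new,t,i_t}$; by the lemma this is precisely $ET(t) < \tau$. Symmetrically, on the vulnerable sample $f$ the Primary Model is correct ($k_f$) and the Secondary Model is wrong ($j_f$), giving $q'_{f,j_f} - q'_{f,k_f} > 0$. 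Correctness on $f$ requires $k_f$ to remain the argmax, hence the flip $k_f \to j_f$ must \emph{not} occur; taking the contrapositive of Lemma~\ref{lemma:exchange_condition} (no flip whenever $ET \geq \tau$), a necessary condition is $\tau \leq ET(f)$.

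The decisive step is then to observe that Theoretical Optimality demands correctness on \emph{every} sample of $T \cup F \cup N_{corr}$, in particular on this one pair $t, f$ simultaneously. Both necessary conditions must therefore hold for the same $\tau$, yielding
$$ ET(t) < \tau \leq ET(f), \qquad \text{hence} \qquad ET(t) < ET(f). $$
This contradicts the hypothesis $ET(t) \geq ET(f)$ of Assumption~\ref{ass:margin_overlap}; moreover the contradiction persists at the boundary $ET(t) = ET(f)$, since then $\tau > ET(t) = ET(f) \geq \tau$ is impossible. As every $w \in (0,1)$ corresponds to some $\tau > 0$ and no such $\tau$ satisfies both constraints, no single static weight achieves optimality.

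I expect the main obstacle to lie not in the contradiction itself, which is short, but in justifying the reduction from the margin form of Assumption~\ref{ass:margin_overlap} to the quotient form $ET(t) \geq ET(f)$: the assumption's numerators involve $\max_{j \neq k_f} q_{f,j}$ and $\max_{j \neq k_t} q'_{t,j}$ rather than the specific competitor classes $j_f$ and $i_t$ that appear in the exchange thresholds, so one must check that these second-best margins bound the relevant $ET$ numerators and denominators in the correct direction (and that both numerator and denominator stay strictly positive so the quotients are well defined). A secondary point requiring care is fixing a consistent tie-breaking convention so that ``flip'' corresponds to strict $ET < \tau$ and ``no flip'' to $ET \geq \tau$, exactly as in Lemma~\ref{lemma:exchange_condition}; I would also record explicitly that reducing ``correct argmax'' to the single pairwise comparisons $k_t$ vs.\ $i_t$ and $k_f$ vs.\ $j_f$ is legitimate because these are \emph{necessary} conditions for a correct argmax, and defeating a necessary condition already precludes optimality.
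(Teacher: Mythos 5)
Your proposal is correct and follows essentially the same route as the paper: set $\tau = \frac{1-w}{w}$, use Lemma~\ref{lemma:exchange_condition} to turn correctness on $t$ into $\tau > ET(t)$ and preservation of $f$ into $\tau \le ET(f)$, and derive the contradiction from $ET(t) \ge ET(f)$. Your closing remark about justifying the passage from the margin form of Assumption~\ref{ass:margin_overlap} (with $\max_{j\neq k_f} q_{f,j}$) to the threshold inequality $ET(t) \ge ET(f)$ is in fact more careful than the paper, which simply invokes the parenthetical restatement in the theorem and uses $ET(t) \ge ET(f)$ directly.
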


\begin{proof}
Theoretical Optimality requires the fused model $q_{new}$ to correctly classify all samples in $S_{opt} = T \cup F \cup N_{corr}$. Since $T \cup F \subset S_{opt}$, a strict necessary condition for Theoretical Optimality is satisfying the classification constraints for the subset $T \cup F$. Therefore, we can omit the discussion of $N_{corr}$ and focus entirely on the conflicting conditions between $T$ and $F$.

For any sample $t \in T$ (where the Primary Model is incorrect and the Secondary Model is correct), the ensemble must flip the prediction to the true class $k_t$. According to Lemma \ref{lemma:exchange_condition}, this requires the weight ratio $\tau = \frac{1-w}{w}$ to exceed the Exchange Threshold for correction:
\begin{equation}
\label{eq:cond_t}
\tau > \frac{q_{t, i_t} - q_{t, k_t}}{q'_{t, k_t} - q'_{t, i_t}} = ET(t)
\end{equation}

Conversely, for any sample $f \in F$ (where the Primary Model is correct and the Secondary Model is incorrect), the ensemble must preserve the true class $k_f$ and avoid flipping to the incorrect class $j_f$. According to Lemma \ref{lemma:exchange_condition}, this requires $\tau$ to remain below the Exchange Threshold for error induction:
\begin{equation}
\label{eq:cond_f}
\tau < \frac{q_{f, k_f} - q_{f, j_f}}{q'_{f, j_f} - q'_{f, k_f}} = ET(f)
\end{equation}

For a single static weight $w$ to exist, there must be a valid $\tau$ satisfying both inequalities (\ref{eq:cond_t}) and (\ref{eq:cond_f}) for all pairs of samples in $T$ and $F$. This implies the necessary condition:
$$ ET(t) < \tau < ET(f) \implies ET(t) < ET(f) $$

However, Assumption \ref{ass:margin_overlap} states that there exists at least one pair of samples $t \in T$ and $f \in F$ such that the margin stratification is violated, i.e., $ET(t) \ge ET(f)$. Substituting this into the necessary condition yields a contradiction:
$$ \tau > ET(t) \ge ET(f) > \tau $$
This implies $\tau > \tau$, which is mathematically impossible. Consequently, no such static weight $w$ exists that can simultaneously correct $t$ and preserve $f$, rendering Theoretical Optimality unreachable.
\end{proof}

\begin{assumption}[Preservation of Margin Inversion]
\label{ass:preservation}
Let $Q_{ens}$ be a static linear ensemble of a set of models $\{q_i\}_{i=1}^M$ with weights summing to 1. 
Since static weighting applies a uniform linear scaling to all output probabilities, the smoothing effect on the margins of incorrect and correct predictions is consistent. 

Therefore, we assume that if the constituent models satisfy the margin violation condition (Assumption \ref{ass:margin_overlap}), the aggregated ensemble $Q_{ens}$ retains this property. Specifically, for the ensemble model, there still exists a pair of samples $t \in T, f \in F$ such that the confidence in the error remains greater than or equal to the confidence in the correct prediction:
$$ (Q_{ens,t, i_t} - Q_{ens,t, k_t}) \ge (Q_{ens,f, k_f} - \max_{j \neq k_f} Q_{ens,f,j}) $$
\end{assumption}

\begin{corollary}[Impossibility for Multi-Model Ensembles]
\label{cor:multi_model_impossibility}
Consider an ensemble of $M$ models $\mathcal{Q} = \{q_1, \dots, q_M\}$, where each model satisfies Assumption \ref{ass:margin_overlap}. Under Assumption \ref{ass:preservation}, there exists \textbf{no static weight distribution} $\{w_1, \dots, w_M\}$ that allows the ensemble to achieve Theoretical Optimality.
\end{corollary}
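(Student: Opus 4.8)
The plan is to collapse the multi-model ensemble into a single fixed predictor and then reuse the contradiction mechanism at the end of Theorem~\ref{thm:impossibility} verbatim. The key observation is that, once a static weight distribution $\{w_1,\dots,w_M\}$ is fixed, the ensemble $Q_{ens} = \sum_{i=1}^M w_i q_i$ is itself just one fixed probability predictor with no per-sample freedom left to exploit. So first I would fix an arbitrary admissible $\{w_i\}$ and form $Q_{ens}$; then, exactly as in the proof of Theorem~\ref{thm:impossibility}, I would drop $N_{corr}$ (since $T \cup F \subset S_{opt}$) and reduce Theoretical Optimality to the strictly weaker necessary requirement that $Q_{ens}$ correctly classify every sample in $T \cup F$.

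Next I would invoke Assumption~\ref{ass:preservation} to obtain a witness pair $t \in T$, $f \in F$ (possibly depending on the chosen weights) for which the aggregated margin inversion holds, namely $(Q_{ens,t,i_t} - Q_{ens,t,k_t}) \ge (Q_{ens,f,k_f} - \max_{j \neq k_f} Q_{ens,f,j})$. I would then translate the two classification requirements into sign conditions on precisely these two quantities: correctly classifying $t$ forces $\arg\max_i Q_{ens,t,i} = k_t$, hence $Q_{ens,t,i_t} - Q_{ens,t,k_t} < 0$; correctly classifying $f$ forces $\arg\max_i Q_{ens,f,i} = k_f$, hence $Q_{ens,f,k_f} - \max_{j \neq k_f} Q_{ens,f,j} > 0$. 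Chaining these places the ensemble's error margin on $t$ strictly below its correct margin on $f$, directly contradicting the $\ge$ supplied by Assumption~\ref{ass:preservation}. Since the weight distribution was arbitrary, no static $\{w_i\}$ can achieve Theoretical Optimality, which is the claim.

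The step I expect to be the main obstacle is \emph{resisting} the tempting but messy route of literally generalizing the Exchange Threshold analysis of Lemma~\ref{lemma:exchange_condition}. In the two-model case the entire tension is captured by squeezing a single scalar ratio $\tau = (1-w)/w$ between $ET(t)$ and $ET(f)$; with $M$ models there is no single $\tau$, and trying to carve out the subset of the weight simplex that simultaneously corrects $t$ and preserves $f$ degenerates into an unwieldy multidimensional feasibility problem. The clean resolution is to never expand $Q_{ens}$ into its constituents: collapse the ensemble to one predictor and let Assumption~\ref{ass:preservation} carry all the combinatorial content, after which the contradiction is immediate. I would also flag honestly that the real substance lives in Assumption~\ref{ass:preservation} — the claim that uniform linear scaling preserves the ordering of error versus correct margins — so this is assumed (justified heuristically by the consistency of the smoothing effect) rather than derived. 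An equivalent but not cheaper framing, for readers who prefer an explicit appeal to Theorem~\ref{thm:impossibility}, is to write $Q_{ens} = w_1 q_1 + (1-w_1)\tilde{q}$ with $\tilde{q} = (1-w_1)^{-1}\sum_{i \ge 2} w_i q_i$ and apply the two-model theorem to the pair $(q_1,\tilde{q})$; this still relies on Assumption~\ref{ass:preservation} to certify that the blend $\tilde{q}$ inherits the margin violation, so it offers no genuine saving over the direct argument above.
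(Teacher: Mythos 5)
Your proof is correct, but it follows a genuinely different route from the paper's. The paper proves the corollary by induction on $M$: the base case $M=2$ is Theorem~\ref{thm:impossibility}, and the inductive step writes $Q_{k+1} = \alpha\, Q_k + (1-\alpha)\, q_{k+1}$, uses Assumption~\ref{ass:preservation} to certify that the partial ensemble $Q_k$ inherits the margin violation, and then re-applies the two-model theorem to the pair $(Q_k, q_{k+1})$ --- essentially the ``alternative framing'' you mention and set aside at the end. Your main argument instead collapses the whole ensemble into a single fixed predictor, invokes Assumption~\ref{ass:preservation} once on the final $Q_{ens}$, and observes that correct classification of $t$ forces $Q_{ens,t,i_t}-Q_{ens,t,k_t}<0$ while correct classification of $f$ forces $Q_{ens,f,k_f}-\max_{j\neq k_f}Q_{ens,f,j}>0$, which together contradict the $\ge$ the assumption supplies. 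What your version buys is brevity and robustness: it never needs to define what the ``optimal static ensemble of the first $k$ models'' is, never needs the partial ensembles $Q_k$ to pair up with $q_{k+1}$ as a PM/SM couple sharing a common witness pair, and never re-enters the Exchange Threshold machinery --- all of which are mildly delicate points in the paper's induction. What the paper's version buys is a more explicit link back to Theorem~\ref{thm:impossibility}, making visible that the multi-model case is ``the two-model obstruction occurring at every stage of aggregation.'' Both proofs place the entire combinatorial burden on Assumption~\ref{ass:preservation}, and you are right to flag that explicitly; neither argument derives the preservation of the margin inversion, so the corollary is only as strong as that assumption.
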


\begin{proof}
We proceed by induction on the number of models $M$. 

\begin{itemize}
    \item \textbf{Base Case ($M=2$):} 
    This is a direct application of Theorem \ref{thm:impossibility}. When the margin violation condition holds ($ET(t) \ge ET(f)$), no static weight can simultaneously satisfy the conflicting requirements for correction and preservation.

    \item \textbf{Inductive Step:}  
    Assume the statement holds for $k$ models. Let $Q_k$ be the optimal static ensemble of the first $k$ models. By the inductive hypothesis, $Q_k$ fails to achieve optimality. Furthermore, by Assumption \ref{ass:preservation}, $Q_k$ retains the margin violation property.

    Now, consider adding the $(k+1)$-th model, $q_{k+1}$. The new ensemble $Q_{k+1}$ can be formulated as a pairwise integration between the existing ensemble and the new model:
    $$ Q_{k+1} = \alpha \cdot Q_k + (1-\alpha) \cdot q_{k+1} $$
    Since both $Q_k$ (by inheritance) and $q_{k+1}$ (by definition) satisfy the margin violation condition, this mathematically reduces to the Base Case ($M=2$). Consequently, no static scalar $\alpha$ exists that can render $Q_{k+1}$ theoretically optimal without inducing errors.
\end{itemize}

By induction, Theoretical Optimality is unreachable for any finite number of models $M \ge 2$.
\end{proof}

\begin{theorem}[Algebraic Monotonicity of Ensemble Potential]
\label{thm:algebraic_monotonicity}
Consider an ensemble of $M$ models with weights $\mathbf{w} = [w_1, \dots, w_M]$ ($w_i \ge 0$).
For any sample $s$, let the \textbf{Discriminative Margin} of the $i$-th model be defined as the probability difference between the true class $k_s$ and the most competitive incorrect class $j$:
$$ \delta_i(s) = q_{s, k_s}^{(i)} - q_{s, j}^{(i)} $$
The ensemble correctly classifies sample $s$ if and only if the weighted sum of margins is positive:
$$ \Delta_{ens}(s, \mathbf{w}) = \sum_{i=1}^M w_i \delta_i(s) > 0 $$

We define \textbf{Model Improvement} as the reduction of margin overlap (Assumption \ref{ass:margin_overlap}). Specifically:
\begin{itemize}
    \item For correctable samples ($t \in T$), the incorrect prediction confidence decreases or the true class confidence increases. This means $\delta_i(t)$ increases (becomes less negative or turns positive).
    \item For preservation samples ($f \in F$), the correct prediction confidence increases or remains stable. This means $\delta_i(f)$ increases (becomes more positive).
\end{itemize}
Formally, if the system state updates such that $\delta'_i(s) \ge \delta_i(s)$ for all $i, s$, then the global maximum accuracy is strictly non-decreasing:
\begin{equation}
    \Delta ACC'_{max} \ge \Delta ACC_{max}
\end{equation}
\end{theorem}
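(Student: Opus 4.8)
The plan is to recast the claim as a statement about the upper envelope of a parametrised family of accuracy functions, exploiting the fact that for nonnegative weights the per-sample correctness predicate is monotone in the margins. First I would make the optimisation explicit. Because the predicate $\sum_i w_i \delta_i(s) > 0$ is invariant under positive rescaling of $\mathbf{w}$, I may restrict to the simplex $\{w_i \ge 0,\ \sum_i w_i = 1\}$ without loss of generality. For such a $\mathbf{w}$ define $ACC(\mathbf{w}) = \frac{1}{|S|}\sum_{s \in S} \mathbb{1}\!\left[\sum_{i=1}^M w_i \delta_i(s) > 0\right]$ and set $ACC_{max} = \max_{\mathbf{w}} ACC(\mathbf{w})$ (the quantity written $\Delta ACC_{max}$ in the statement), with $ACC'_{max}$ defined analogously from the improved margins $\delta'_i$. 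The maximum is attained because the simplex is compact and $ACC(\cdot)$ takes only finitely many values, its level sets being finite unions of polyhedral cells cut out by the linear inequalities $\sum_i w_i \delta_i(s) > 0$; thus an optimiser $\mathbf{w}^\star$ for the original margins exists.

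The key step is a one-line monotonicity observation evaluated at the fixed weight $\mathbf{w}^\star$. For any sample $s$ that $\mathbf{w}^\star$ classifies correctly under the original margins we have $\sum_i w_i^\star \delta_i(s) > 0$; since each $w_i^\star \ge 0$ and, by the Model Improvement hypothesis, $\delta'_i(s) \ge \delta_i(s)$ for every $i$, linearity gives
$$ \sum_{i=1}^M w_i^\star \delta'_i(s) \;\ge\; \sum_{i=1}^M w_i^\star \delta_i(s) \;>\; 0, $$
so $s$ remains correctly classified under the improved margins. Hence the correctly-classified set can only grow, yielding $ACC'(\mathbf{w}^\star) \ge ACC(\mathbf{w}^\star) = ACC_{max}$.

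The conclusion follows from an envelope argument: since $ACC'_{max}$ is the maximum of $ACC'(\cdot)$ over all admissible weights, it dominates the particular value attained at the witness $\mathbf{w}^\star$, so $ACC'_{max} \ge ACC'(\mathbf{w}^\star) \ge ACC_{max}$, which is exactly the asserted non-decrease. Crucially, the optimiser for the improved system need not equal $\mathbf{w}^\star$; I use $\mathbf{w}^\star$ only as a feasible witness, and it is the freedom to re-optimise that promotes the pointwise inequality into an inequality between the two maxima. The inequality is non-strict in general, with equality precisely when no newly margin-improved sample crosses the decision boundary.

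The step I expect to require the most care is the "most competitive incorrect class" implicit in $\delta_i(s)$ once $|V| > 2$. The ensemble is correct on $s$ exactly when $\sum_i w_i(q^{(i)}_{s,k_s} - q^{(i)}_{s,j}) > 0$ holds simultaneously for every $j \ne k_s$, so the scalar criterion should be read through the per-class margins $\delta_{i,j}(s) = q^{(i)}_{s,k_s} - q^{(i)}_{s,j}$, i.e. correctness is $\min_{j \ne k_s}\sum_i w_i \delta_{i,j}(s) > 0$. Model Improvement must then be interpreted classwise as $\delta'_{i,j}(s) \ge \delta_{i,j}(s)$ for all $j$; granting this, the inequality survives the minimum, since evaluating the left-hand minimiser $j^\star$ against the right-hand minimum gives $\min_j \sum_i w_i \delta'_{i,j}(s) = \sum_i w_i \delta'_{i,j^\star}(s) \ge \sum_i w_i \delta_{i,j^\star}(s) \ge \min_j \sum_i w_i \delta_{i,j}(s)$. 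With this reading the two-class case is immediate, since there the competitor $j$ is unique, and the multi-class case reduces to the identical envelope argument applied through the minimum.
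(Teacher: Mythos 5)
Your proposal is correct and follows essentially the same route as the paper's proof: fix the optimal weight vector $\mathbf{w}^\star$ of the original system as a witness, use non-negativity of the weights and linearity to show every originally correct sample stays correct under the improved margins, and then pass to the maximum over weights. Your additional remarks on the existence of the maximiser and on reading the margin through $\min_{j\neq k_s}$ in the multi-class case are careful refinements of details the paper leaves implicit, but they do not change the argument.
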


\begin{proof}
Let $A(\mathbf{w})$ be the set of samples correctly classified by a specific weight vector $\mathbf{w}$.
$$ A(\mathbf{w}) = \{ s \in S \mid \sum_{i=1}^M w_i \delta_i(s) > 0 \} $$

Consider the updated state where margins improve to $\delta'_i(s) \ge \delta_i(s)$.
Since all weights are non-negative ($w_i \ge 0$), the linearity of the ensemble summation guarantees:
$$ \sum_{i=1}^M w_i \delta'_i(s) \ge \sum_{i=1}^M w_i \delta_i(s) $$

This implies a strict inclusion of classification capability for any fixed $\mathbf{w}$:
\begin{enumerate}
    \item If a sample was originally correct ($\sum w_i \delta_i(s) > 0$), the new sum is even larger, so it remains correct.
    \item If a sample was originally incorrect ($\sum w_i \delta_i(s) \le 0$), the increase may push the sum above 0, correcting it.
\end{enumerate}

Thus, for any candidate weight vector $\mathbf{w}$, the number of correct samples is non-decreasing: $|A'(\mathbf{w})| \ge |A(\mathbf{w})|$.
Let $\mathbf{w}^*$ be the optimal weight configuration for the original system. It follows that:
$$ \Delta ACC'_{max} = \max_{\mathbf{w}} \frac{|A'(\mathbf{w})|}{|S|} \ge \frac{|A'(\mathbf{w}^*)|}{|S|} \ge \frac{|A(\mathbf{w}^*)|}{|S|} = \Delta ACC_{max} $$
This proves that reducing the margin overlap (improving separability) strictly preserves or improves the global performance potential.
\end{proof}

These proofs derive the core intuition behind BPE. Theorem \ref{thm:impossibility} and Corollary \ref{cor:multi_model_impossibility} demonstrate the unavoidable theoretical defects of static integration, rendering dynamic integration necessary. However, the prevalence of static ensembles—despite their lack of theoretical advantage—raises a critical question: what form of dynamic weighting should be employed?

Theorem \ref{thm:algebraic_monotonicity} answer this by identifying that the behavioral adjustment of a model's internal output probabilities is the decisive factor determining the magnitude of accuracy (ACC) improvement. This insight guides a shift in the ensemble learning perspective from inter-model comparison to intra-model behavioral adjustment. Assuming that specific internal behavioral patterns form the foundation of effective ensembling (e.g., on average, a model's output probabilities tend to be more extreme when making correct predictions), appropriately amplifying these patterns can further enhance the capability of the ensemble model.

Beyond defining the theoretical limit of ACC improvement, appropriate adjustment of internal behavior facilitates a larger segmentation space, making the search for ensemble weights significantly easier. This expanded weight determination space addresses the limitations of traditional dynamic ensemble methods (such as DES/DCS), which often rely heavily on validation sets and lack generalization, leading to distorted weight determination. 

Taking the two-model case as an example, a larger difference $ET(f) - ET(t)$ implies a wider feasible range for the weight ratio $\tau$, thereby simplifying the determination of valid weights.

Furthermore, the analysis of internal differences aligns with the established phenomenon that distinct models exhibit inherent behavioral disparities, often rendering their raw output probabilities directly incomparable.

Collectively, these studies provide the theoretical intuition and basis for the BPE framework to define the ensemble perspective as the behavioral gap of a model relative to its own inherent behavioral profile.

\section{BPE: Behavioral Profiling Ensemble}\label{4}

To address the limitations of conventional DES/DCS methods regarding storage overhead, retrieval latency, and failure in high-dimensional spaces, this section proposes BPE, a validation-free dynamic ensemble framework. The BPE framework abandons the traditional evaluation paradigm that relies on external reference sets for neighborhood retrieval. Instead, it adopts the concept of constructing model \textit{behavioral profiles} via training set perturbations to reveal the intrinsic behavioral characteristics of base classifiers. Specifically, Section 3.1 elucidates the design philosophy of BPE, focusing on the paradigm shift from ``resume screening'' to ``behavioral profiling.'' Section 3.2 details a specific implementation algorithm, BPE-Entropy, based on information entropy. Finally, Section 3.3 provides a systematic analysis of the algorithmic complexity, demonstrating its advantages in computational efficiency and deployment flexibility.

\subsection{Design Philosophy of BPE}

The core design philosophy of the BPE framework lies in shifting the evaluation perspective of dynamic ensembles from \textit{external validation} to \textit{internal consistency}. Existing DES/DCS methods essentially follow a neighborhood-based validation paradigm, which assumes that the feature space surrounding a test sample shares similar decision boundaries with certain historical samples in the reference set. Under this assumption, the system infers a model's performance on the current sample by retrieving neighbors from the reference set and calculating the model's accuracy on these proxies.

However, this paradigm faces both theoretical and engineering challenges in practical applications. Theoretically, ``distance'' in high-dimensional feature spaces often loses its discriminative power \cite{hastie2009elements}, causing neighbors retrieved via Euclidean distance to be spurious (semantically irrelevant), thus introducing noisy evaluations. From an engineering perspective, reliance on a reference set necessitates maintaining an extensive reference set and performing expensive search computations during inference, which not only increases storage and latency overheads but also hinders the system's ability to handle streaming or privacy-sensitive data.

To this end, we propose a novel paradigm centered on model behavioral profiling. We posit that the decision reliability of a model for a given instance should not be determined solely by its historical performance metrics, but rather by its \textit{intrinsic state} when confronted with the current input. To capture these inherent behavioral attributes, we introduce training set perturbation as the core mechanism for establishing behavioral profiles. This is analogous to a stress test: rather than relying on a subject's past records, we observe their authentic reactions under controlled environmental pressure.

Within the BPE framework, we construct a testing environment for profiling by injecting noise into the training set. Through this process, the divergence patterns in the model's output distribution implicitly reveal its latent behavioral characteristics. Consequently, this allows us to discern when a model exhibits high uncertainty versus high predictive confidence. We allocate greater weight to a model when it demonstrates consistent confidence. By calibrating a model's response to different samples relative to its established behavioral profile, we derive a robust basis for dynamic weight assignment.

This approach offers several distinct advantages over existing methods:
\begin{itemize}
    \item \textbf{Minimal Storage Overhead:} The maintenance cost is significantly lower than that of traditional DES/DCS methods, as BPE requires storing only a compact behavioral profile (typically a vector) for each model, rather than a raw reference dataset.
    \item \textbf{Validation-Free \& Data Efficiency:} Since the construction of behavioral profiles does not mandate a separate validation set with ground-truth labels, the framework is highly effective in data-scarce regimes. It avoids the structural dependency on hold-out sets and eliminates the substantial computational costs associated with Out-Of-Fold (OOF) prediction techniques typically required to utilize full datasets.
    \item \textbf{Inference Efficiency:} Unlike retrieval-based DES/DCS methods, BPE necessitates only the calculation of lightweight statistical metrics during the inference phase. By eliminating complex nearest-neighbor search processes, BPE achieves a substantial advantage in inference speed, particularly in scenarios involving large-scale datasets.
\end{itemize}

Let $\mathcal{M}=\{h_1,\dots,h_K\}$ denote a pool of probabilistic base classifiers. BPE is a framework consisting of three components: (i) a \emph{behavioral profile definition}, which specifies the form of a model-specific profile $\mathcal{P}_k$ that summarizes the typical behavioral pattern of $h_k$; (ii) a \emph{profile construction rule}, which defines how $\mathcal{P}_k$ is obtained from the model responses under a prescribed profiling field; and (iii) a \emph{deviation scoring rule}, which computes a score to quantify how the behavior of $h_k$ on the current test instance deviates from $\mathcal{P}_k$. In Section 3.2 we present BPE-Entropy as one representative instantiation.

\begin{figure}[t]
    \centering
    \includegraphics[width=0.9\linewidth]{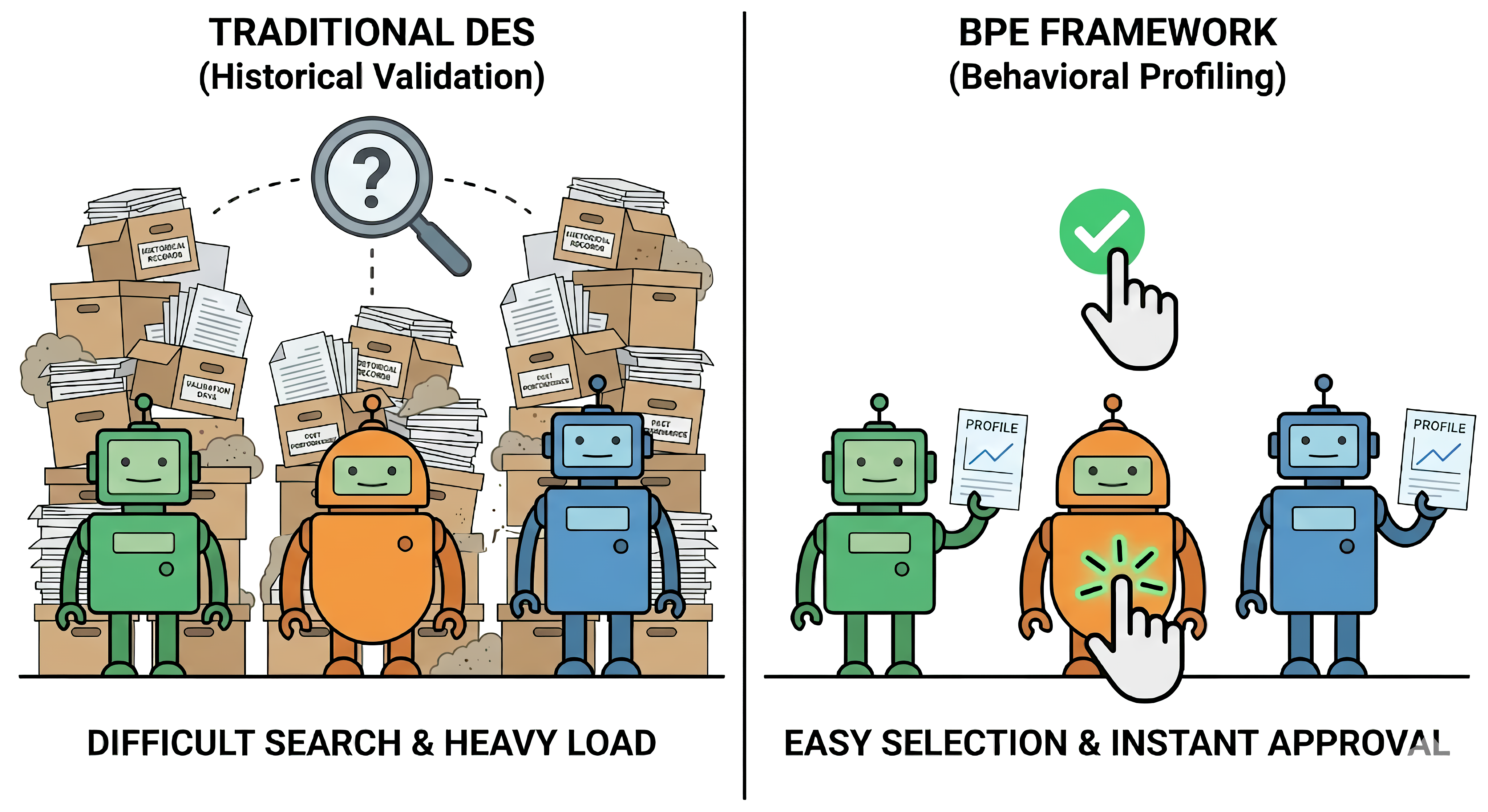}
    \caption{Comparison between BPE and traditional DES.}
    \label{bpe}
\end{figure}

\subsection{BPE-Entropy}

This section details the implementation of the BPE-Entropy algorithm. The algorithm utilizes the negative information entropy of the predictive distribution as the core metric for measuring intrinsic model uncertainty. By constructing behavioral profiles based on this metric, we facilitate dynamic weighting and achieve self-adaptive fusion of base classifiers.

Let the training set be denoted as $\mathcal{D} = \{(\mathbf{x}_i, y_i)\}_{i=1}^N$, and let $\mathcal{M} = \{h_1, \dots, h_K\}$ be a set of $K$ pre-trained heterogeneous base classifiers. For any input $\mathbf{x}$, the output of the $k$-th model regarding the $C$ classes is represented as a probability distribution vector $\mathbf{p}_k(\mathbf{x}) = [p_{k,1}, \dots, p_{k,C}]^\top$. To quantify the confidence of a model on a specific sample, we define the confidence score function $S(\cdot)$ as the negative information entropy of the predictive distribution:

\begin{equation}
S(\mathbf{p}_k(\mathbf{x})) = \sum_{c=1}^C p_{k,c}(\mathbf{x}) \log p_{k,c}(\mathbf{x})
\end{equation}

Note that $S(\mathbf{p})$ is the negative entropy of $\mathbf{p}$, i.e., $S(\mathbf{p}) = -H(\mathbf{p})$, so a larger $S$ value indicates a sharper predictive distribution and higher model certainty.

The first stage of the algorithm involves offline behavioral profiling, aimed at establishing a performance baseline for each model under simulated real-world conditions. Since profiling does not strictly rely on ground-truth labels, we move beyond traditional reference set retrieval and instead apply Gaussian perturbations to the training set feature space to simulate uncertainty in the testing environment. 

This strategic choice is underpinned by critical advantages regarding efficiency and stability. Primarily, utilizing the training set allows for full-data utilization without incurring the computational overhead associated with Out-of-Fold (OOF) predictions, ensuring that base models are trained on the complete dataset. Furthermore, the training set provides a substantially larger sample size than a typical validation set ($N_{train} \gg N_{val}$), which generally yields a more reliable estimation of average model behavior and reduces the risk of skewing the profile due to small-sample randomness. Crucially, relying on a validation set carries the risk of extreme scenarios—where hold-out data is inadvertently too difficult or too trivial—which would distort the estimated baseline; in contrast, applying fixed Gaussian perturbation creates a standardized stress-test environment that eliminates such bias caused by the arbitrary difficulty of unseen samples.

To implement this simulation while ensuring mathematical consistency, we explicitly handle nominal features prior to perturbation. Specifically, all categorical attributes are transformed into numerical vectors via One-Hot Encoding, ensuring the input $\mathbf{x}_i$ resides in a continuous metric space. The noise injection is applied to this encoded space. Although the perturbed one-hot vectors are no longer binary, this continuous relaxation is used only for probing model sensitivity during profiling and does not require semantic validity at the feature level. The selection of a Gaussian distribution for the noise vector $\epsilon$ is theoretically grounded in the Principle of Maximum Entropy proposed by Jaynes \cite{jaynes1957information}. According to this principle, under a fixed variance constraint, the Gaussian distribution has the maximum entropy among all distributions. Choosing this distribution ensures that we introduce the least amount of structural bias into the simulation, as it avoids implicit assumptions (such as the hard boundaries inherent in a uniform distribution) that are unjustified without prior knowledge. Consequently, for each sample $\mathbf{x}_i$, in the training set, we generate a perturbed sample $\tilde{\mathbf{x}}_i = \mathbf{x}_i + \epsilon$, where $\epsilon \sim \mathcal{N}(0, \delta^2 \mathbf{I})$ represents the unbiased noise vector.

Subsequently, we characterize the distribution of confidence scores for each base model across the entire perturbed dataset by calculating the mean $\mu_k$ and standard deviation $\sigma_k$:

\begin{equation}
\mu_k = \frac{1}{N} \sum_{i=1}^N S(\mathbf{p}_k(\tilde{\mathbf{x}}_i)), \quad \sigma_k = \sqrt{\frac{1}{N-1} \sum_{i=1}^N (S(\mathbf{p}_k(\tilde{\mathbf{x}}_i)) - \mu_k)^2}
\end{equation}

We define the tuple $\mathcal{P}_k = (\mu_k, \sigma_k)$ as the behavioral profile of model $h_k$, where $\mu_k$ represents the average confidence level when facing uncertainty, and $\sigma_k$ characterizes the sensitivity of its confidence fluctuations. This process is executed only once post-training, with the parameters $\mathcal{P}_k$ stored as lightweight metadata.

The second stage of the algorithm is online dynamic weighting. During the inference phase, for a new test sample $\mathbf{x}_{\text{test}}$, the system bypasses neighborhood searches and directly computes the instantaneous confidence for each model:

\begin{equation}
S_{\text{test},k} = S(\mathbf{p}_k(\mathbf{x}_{\text{test}}))
\end{equation}

To eliminate scale disparities in probabilities caused by differing algorithmic principles among base models, we introduce the concept of relative confidence. Utilizing the pre-stored profiling parameters, we normalize the instantaneous confidence via a Z-score transformation:

\begin{equation}
z_k(\mathbf{x}_{\text{test}}) = \frac{S_{\text{test},k} - \mu_k}{\sigma_k + \xi}
\end{equation}

where $\xi$ is a constant for numerical stability (we set $\xi = 10^{-12}$ in our experiments). The physical interpretation of $z_k$ is intuitive: it measures the number of standard deviations the current model confidence deviates from its historical behavioral profile. If $z_k > 0$, the model exhibits higher-than-usual confidence and will be assigned a larger weight; otherwise, its contribution will be down-weighted. In practice, we clip $z_k$ to the range $[-5, 5]$ to avoid extreme values when $\sigma_k$ is small. Finally, we map these normalized scores to non-negative weights. To amplify the contribution of high-confidence models, we employ an exponential mapping function followed by normalization:

\begin{equation}
w_k(\mathbf{x}_{\text{test}}) = \frac{\exp(\lambda \cdot z_k(\mathbf{x}_{\text{test}}))}{\sum_{j=1}^K \exp(\lambda \cdot z_j(\mathbf{x}_{\text{test}}))}
\end{equation}

where $\lambda$ is a sensitivity hyperparameter. Larger $\lambda$ yields a more peaked weight distribution. The final ensemble output $H(\mathbf{x}_{\text{test}})$ is derived from the weighted combination of the base models. Owing to the versatility of the BPE framework, these weights can be applied directly to probability vectors or to ranking results, flexibly adapting to different task requirements:

\begin{equation}
H(\mathbf{x}_{\text{test}}) = \sum_{k=1}^K w_k(\mathbf{x}_{\text{test}}) \cdot \Phi(\mathbf{x}_{\text{test}})_k
\end{equation}

where $\Phi(\cdot)_k$ represents the output format of the $k$-th model (e.g., probability vector or rank vector).

\begin{algorithm}[t]
\caption{The BPE-Entropy Algorithm}
\label{alg:bpe_entropy}
\begin{algorithmic}[1]
\Require 
    Training dataset $\mathcal{D} = \{(\mathbf{x}_i, y_i)\}_{i=1}^N$; 
    Pool of base classifiers $\mathcal{M} = \{h_1, \dots, h_K\}$; 
    Perturbation scale $\delta$; 
    Sensitivity factor $\lambda$;
    $\xi$ is a small constant $(10^{-12})$.
\Ensure 
    Ensemble prediction $H(\mathbf{x}_{\text{test}})$.

\Statex \textbf{// Phase 1: Offline Profiling}
\State Generate perturbed dataset $\tilde{\mathcal{D}}$ by adding Gaussian noise $\epsilon \sim \mathcal{N}(0, \delta^2\mathbf{I})$ to $\mathcal{D}$.
\For{$k = 1$ to $K$}
    \State Compute confidence scores (negative entropy) for all samples in $\tilde{\mathcal{D}}$ using $h_k$.
    \State Calculate mean $\mu_k$ and standard deviation $\sigma_k$ of the scores.
    \State Store behavioral profile $\mathcal{P}_k \leftarrow (\mu_k, \sigma_k)$.
\EndFor

\Statex \textbf{// Phase 2: Online Dynamic Weighting}
\State \textbf{Input:} New test sample $\mathbf{x}_{\text{test}}$.
\For{$k = 1$ to $K$}
    \State Compute instantaneous confidence: $S_{k} \leftarrow \text{NegEntropy}(h_k(\mathbf{x}_{\text{test}}))$.
    \State Compute Z-score standardization:
    \State \quad $z_k \leftarrow (S_{k} - \mu_k) / (\sigma_k + \xi)$
    \State Clip: $z_k \leftarrow \min\big(\max(z_k, -5), 5\big)$.
    \State Compute unnormalized weight (Exponential Mapping):
    \State \quad $w'_k \leftarrow \exp(\lambda \cdot z_k)$
\EndFor

\State Normalize weights: $w_k \leftarrow w'_k / \sum_{j=1}^K w'_j$.
\State \textbf{Return} Ensemble output $H(\mathbf{x}_{\text{test}}) = \sum_{k=1}^K w_k \cdot \Phi(\mathbf{x}_{\text{test}})_k$.
\end{algorithmic}
\end{algorithm}

\subsection{Complexity Analysis}

To comprehensively evaluate the deployment potential of the BPE framework, this section analyzes its complexity in terms of space complexity and time complexity, contrasting it with conventional neighborhood-based DES/DCS approaches such as KNORA \cite{ko2008dynamic}, LCA \cite{woods1997combination}, and RRC \cite{woloszynski2011probabilistic}.

Traditional DES/DCS methods necessitate maintaining the entire reference set during the inference phase for retrieval purposes. Assuming a reference set size of $N$, a feature dimensionality of $D$, and $K$ base models, the space complexity of traditional DES/DCS is primarily dictated by the scale of the reference set, approximating $\mathcal{O}(N \cdot D + N \cdot K)$. In large-scale data scenarios, memory consumption grows linearly with sample accumulation, imposing significant constraints on storage-limited devices. In contrast, BPE compresses the distributional characteristics into concise ``behavioral profiles'' during the offline phase. For each base model, BPE solely requires storing two scalar statistics ($\mu_k$ and $\sigma_k$), resulting in an auxiliary storage overhead of merely $\mathcal{O}(K)$. Given that $K \ll N \cdot D$, BPE substantially lowers the barriers to deployment.

Regarding time complexity, it is essential to distinguish between offline profiling and online inference. In the offline phase, both traditional DES/DCS and BPE require a comprehensive inference pass over the dataset to compute statistical features. However, during the online inference phase, traditional DES/DCS must execute a $k$-nearest neighbor search for each test sample. Even with optimized indexing structures such as KD-trees, the search cost often degrades to near-linear time in high-dimensional spaces, approaching $\mathcal{O}(N \cdot D)$ in practice, which can significantly increase inference latency as the reference set grows.

Conversely, BPE completely eliminates the dependence on historical data during inference, requiring only the computation of the current sample's entropy and Z-score. The complexity of computing entropy is $\mathcal{O}(C)$ (where $C$ is the number of classes), and the complexity of normalization and weighting is $\mathcal{O}(K)$. Consequently, the total online selection complexity of BPE is $\mathcal{O}(K \cdot C)$. Importantly, this inference cost is independent of the scale of the training or reference data (i.e., independent of $N$), and thus remains constant with respect to dataset size.

\begin{table}[htbp]
\centering
\caption{Complexity comparison between traditional KNN-based DES/DCS and the proposed BPE framework. Note that $N$, $D$, $K$, and $C$ denote the dataset size, feature dimension, number of models, and number of classes, respectively.}
\label{tab:complexity}

\begin{tabular*}{\linewidth}{@{\extracolsep{\fill}} l cccc }
\toprule
\textbf{Method} & \textbf{\makecell{Storage Cost\\(Space)}} & \textbf{\makecell{Offline Preparation\\(Time)}} & \textbf{\makecell{Online Inference\\(Time)}} & \textbf{\makecell{Dependency on\\Dataset Size ($N$)}} \\ 
\midrule
DES/DCS & High: $\mathcal{O}(N \cdot D)$ & High: $\mathcal{O}(N \cdot K)$ & High: $\mathcal{O}(N \cdot D)$ & Linear \\
BPE & \textbf{Low: $\mathcal{O}(K)$} & High: $\mathcal{O}(N \cdot K)$ & \textbf{Fast: $\mathcal{O}(K \cdot C)$} & \textbf{None} \\ 
\bottomrule
\end{tabular*}
\end{table}

\section{Experimental framework}\label{5}

\subsection{Dataset}
We selected 42 real-world datasets from OpenML for experimentation\cite{vanschoren2013openml}. These datasets originate from diverse high-quality sources, including the UCI Machine Learning Repository and NASA, covering a broad spectrum of domains ranging from software engineering (software defect prediction) and healthcare (disease diagnosis and physiological monitoring) to natural science (high-energy physics, biological classification, and climate simulation), business finance (customer churn prediction and credit approval), as well as cybersecurity and information technology (spam and phishing website detection). Detailed specifications of these datasets are provided in Table \ref{tab:datasets}. The configuration of the base model pool remains consistent with that of the simulation experiments. Furthermore, we conducted homogeneous model ensemble experiments; although such scenarios typically manifest as encapsulated algorithms in practical deployment (e.g., Random Forest), our homogeneous ensemble comprises 40 individual Decision Tree models, where the selection protocol for integration remains identical to that of the heterogeneous ensemble.

\begin{table}[htbp]
\centering
\caption{Summary description of the 42 datasets used in the experiments.}
\label{tab:datasets}
\small 
\setlength{\tabcolsep}{4pt} 

\begin{tabular}{@{} c c l c c c c c @{}}
\toprule
Dataset & ID & Dataset Name & \#Ex. & \#Atts. & \#Num. & \#Nom. & \#Cl. \\
\midrule
D1 & 9 & autos & 205 & 26 & 15 & 11 & 6 \\
D2 & 10 & lymph & 148 & 19 & 3 & 16 & 4 \\
D3 & 27 & colic & 368 & 23 & 7 & 16 & 2 \\
D4 & 29 & credit-approval & 690 & 16 & 6 & 10 & 2 \\
D5 & 30 & page-blocks & 5473 & 11 & 10 & 1 & 5 \\
D6 & 42 & soybean & 683 & 36 & 0 & 36 & 19 \\
D7 & 55 & hepatitis & 155 & 20 & 6 & 14 & 2 \\
D8 & 59 & ionosphere & 351 & 35 & 34 & 1 & 2 \\
D9 & 182 & satimage & 6430 & 37 & 36 & 1 & 6 \\
D10 & 333 & monks-problems-1 & 556 & 7 & 0 & 7 & 2 \\
D11 & 337 & SPECTF & 349 & 45 & 44 & 1 & 2 \\
D12 & 1017 & arrhythmia & 452 & 280 & 206 & 74 & 2 \\
D13 & 1036 & sylva\_agnostic & 14395 & 217 & 216 & 1 & 2 \\
D14 & 1040 & sylva\_prior & 14395 & 109 & 108 & 1 & 2 \\
D15 & 1049 & pc4 & 1458 & 38 & 37 & 1 & 2 \\
D16 & 1050 & pc3 & 1563 & 38 & 37 & 1 & 2 \\
D17 & 1053 & jm1 & 10885 & 22 & 21 & 1 & 2 \\
D18 & 1063 & kc2 & 522 & 22 & 21 & 1 & 2 \\
D19 & 1067 & kc1 & 2109 & 22 & 21 & 1 & 2 \\
D20 & 1442 & MegaWatt1 & 253 & 38 & 37 & 1 & 2 \\
D21 & 1460 & banana & 5300 & 3 & 2 & 1 & 2 \\
D22 & 1461 & bank-marketing & 45211 & 17 & 7 & 10 & 2 \\
D23 & 1463 & blogger & 100 & 6 & 0 & 6 & 2 \\
D24 & 1465 & breast-tissue & 106 & 10 & 9 & 1 & 6 \\
D25 & 1467 & climate-model-sim-crashes & 540 & 21 & 20 & 1 & 2 \\
D26 & 1480 & ilpd & 583 & 11 & 9 & 2 & 2 \\
D27 & 1487 & ozone-level-8hr & 2534 & 73 & 72 & 1 & 2 \\
D28 & 1489 & phoneme & 5404 & 6 & 5 & 1 & 2 \\
D29 & 1494 & qsar-biodeg & 1055 & 42 & 41 & 1 & 2 \\
D30 & 1496 & ringnorm & 7400 & 21 & 20 & 1 & 2 \\
D31 & 1511 & wholesale-customers & 440 & 9 & 7 & 2 & 2 \\
D32 & 4538 & GesturePhaseSeg-Processed & 9873 & 33 & 32 & 1 & 5 \\
D33 & 6332 & cylinder-bands & 540 & 40 & 18 & 22 & 2 \\
D34 & 40670 & dna & 3186 & 181 & 0 & 181 & 3 \\
D35 & 40685 & shuttle & 58000 & 10 & 9 & 1 & 7 \\
D36 & 40693 & xd6 & 973 & 10 & 0 & 10 & 2 \\
D37 & 40705 & tokyo1 & 959 & 45 & 42 & 3 & 2 \\
D38 & 40945 & Titanic & 1309 & 14 & 6 & 3 & 2 \\
D39 & 40966 & MiceProtein & 1080 & 82 & 77 & 5 & 8 \\
D40 & 40981 & Australian & 690 & 15 & 6 & 9 & 2 \\
D41 & 41144 & madeline & 3140 & 260 & 259 & 1 & 2 \\
D42 & 41981 & SAT11-HAND-runtime-class & 296 & 116 & 115 & 1 & 14 \\
\bottomrule
\addlinespace 
\multicolumn{8}{p{\linewidth}}{** For datasets with more than 10,000 samples, considering computational resource limitations, we uniformly downsample to 10,000 samples.  Since we repeat the experiment 50 times with different random seeds, most samples from each dataset will actually be selected.} \\
\end{tabular}
\end{table}

\subsection{Base learner Pool}
We assembled a diverse pool containing 13 heterogeneous base classifiers, spanning the mainstream algorithmic paradigms in machine learning:
\begin{itemize}
    \item \textbf{Bagging-based Models:} Including Random Forest \cite{breiman2001random} and Extra Trees \cite{geurts2006extremely}. These methods mitigate the variance of individual decision trees through random feature subspaces and bootstrap aggregation.
    \item \textbf{Boosting-based Models:} Including XGBoost \cite{chen2016xgboost}, LightGBM \cite{ke2017lightgbm}, CatBoost \cite{prokhorenkova2018catboost}, and AdaBoost \cite{freund1997decision}. These models excel in modeling complex non-linear boundaries by iteratively optimizing residuals.
    \item \textbf{Linear and Discriminant Models:} Including Logistic Regression \cite{cox1958regression} and Linear Discriminant Analysis (LDA) \cite{fisher1936use}, which provide robust linear approximations to prevent overfitting.
    \item \textbf{Non-linear and Distance-based Models:} Including MLPClassifier \cite{rumelhart1986learning}, SVC (with probability estimation) \cite{cortes1995support}, and k-Nearest Neighbors (KNN) \cite{cover1967nearest}.
    \item \textbf{Probabilistic Models:} Gaussian Naive Bayes \cite{zhang2004optimality} is included to provide a probabilistic baseline.
\end{itemize}

In practical applications, an ensemble system typically does not use exactly the same model pool for all datasets. Prior work has also pointed out that overly weak models may harm ensemble performance \cite{caruana2004ensemble,martinezmunoz2006pruning,martinezmunoz2007boosting,zhou2002many}. Consistent with this insight, our experiments show that when no screening is performed and obviously mismatched models are forcibly included, the classification accuracy of all evaluated algorithms decreases. Therefore, it is common in practice to construct an appropriate candidate model pool by combining prior knowledge with a screening strategy; for example, in classical machine learning, candidate algorithms are selected according to whether the data satisfy certain model assumptions, and in data science competitions, models with clearly low accuracy are often excluded based on public leaderboards or cross-validation performance.

To simulate this practical workflow, we introduce a simple performance-based screening mechanism as a unified experimental protocol for model-pool construction. Concretely, for each dataset we (i) split the training set into 80\%/20\% for screening, (ii) compute the screening accuracy $\mathrm{ACC}(h_k)$ of each base learner on the 20\% split, (iii) retain only the models satisfying $\mathrm{ACC}(h_k) \ge \mathrm{ACC}_{\text{best}} \times (1 - \alpha)$ with $\alpha = 0.15$, and (iv) retrain the retained models on the full training set for subsequent evaluation. Besides mimicking real-world workflows, this dataset-dependent base model pool introduces diversity across datasets, which makes the empirical comparison more robust. Meanwhile, for each dataset, all ensemble methods are evaluated using the same screened model pool, ensuring a fair comparison between algorithms. We emphasize that this screening mechanism is part of the experimental protocol for model-pool construction; the BPE framework itself does not rely on a validation set for instance-wise weighting.

To guarantee the fairness of the experiments, we adopted standard parameter settings for all base learners, deliberately refraining from specific hyperparameter tuning. This protocol ensures that the observed performance gains are attributable solely to the efficacy of the ensemble integration rather than individual model optimization. The specific hyperparameter configurations for all base learners are detailed in Table \ref{tab:hyperparameters}.

\subsection{Baseline Methods}
To rigorously evaluate the performance of BPE, we compare it against a comprehensive set of competitive ensemble strategies, categorized into static and dynamic approaches.

Our experiments involve multiple conventional DES/DCS baselines that require a reference set. We observe that splitting an additional fixed reference set from the training data significantly reduces the data available for training base models, which in turn leads to a clear drop in classification accuracy for these reference-dependent methods. To ensure these methods can utilize the full dataset and achieve accuracy that is comparable with other methods, we adopt their most common standard implementation in practice: using Out-of-Fold (OOF) predictions to fully exploit all training data \cite{wolpert1992stacked}. Concretely, we perform $5$-fold OOF, where base models are trained on each fold and used to generate OOF predictions on the held-out fold; after OOF predictions are obtained, each base model is retrained on the full training set. Note that $5$-fold OOF introduces additional training cost, and therefore creates a significant computational gap between these reference-dependent methods and BPE.

In addition, to present the impact of different protocols, we report in the Appendix the results of a standard engineering alternative: ``fixed train/reference split + retraining on the full training set after training.'' This alternative typically costs only about $2\times$ the computation of BPE, which is smaller than $5$-fold OOF, but its accuracy is generally worse than the OOF-based strategy.

\paragraph{Static Ensemble Baselines}
\begin{itemize}
    \item \textbf{Single Best Classifier:} (abbreviated as \textbf{SB} in the following tables) This oracle-style reference baseline selects the single best-performing base model according to the reference accuracy used in the screening protocol, and then evaluates this selected model on the test set.
    \item \textbf{Simple Average:} (abbreviated as \textbf{SA} in the following tables) This method operates on the assumption that all base models contribute equally. It computes the arithmetic mean of the output probability vectors from all base classifiers as the final prediction. Despite its simplicity, consistent with Dietterich's statistical perspective, simple averaging effectively smooths decision boundaries and reduces variance by aggregating multiple independent hypotheses, serving as a robust baseline that is often difficult to outperform significantly \cite{clemen1989combining, timmermann2006forecast}.
    \item \textbf{Weighted Average:} (abbreviated as \textbf{WA} in the following tables) An empirical strategy derived from engineering practice. In industrial applications and data science competitions, the Out-of-Fold (OOF) accuracy generated during the cross-validation phase is typically utilized as static weights. This method relies on the intuitive assumption that models demonstrating superior performance on the reference set should be assigned higher weights. during the testing phase.
    \item \textbf{Median Average:} (abbreviated as \textbf{MA} in the following tables) In scenarios where output probability vectors exhibit extreme outliers, the median offers superior robustness compared to the mean and serves as a commonly utilized ensemble technique.
\end{itemize}

\paragraph{Dynamic Ensemble Baselines}
\begin{itemize}
    \item \textbf{LCA (Local Class Accuracy):} A classic dynamic selection method proposed by Woods et al. \cite{woods1997combination}. It assigns weights based on local class accuracy. For each test sample, LCA retrieves the $k$-nearest neighbors from the reference set and calculates the classification accuracy of each base model regarding the predicted class within this local neighborhood. If a model demonstrates high historical local accuracy, it is deemed an ``expert'' for that specific sample.
    \item \textbf{KNORA-Union (k-Nearest Oracles Union):} Proposed by Ko et al. \cite{ko2008dynamic}, this algorithm is predicated on the ``Nearest Oracle'' assumption. Unlike LCA, it employs a direct voting mechanism: for each neighbor of the test sample, every base model that correctly classifies that neighbor is granted a vote. This relatively lenient strategy tends to involve a broader subset of models in the decision-making process, thereby reducing predictive variance.
    \item \textbf{KNORA-Eliminate (k-Nearest Oracles Eliminate):} A stricter variant within the KNORA family. It mandates that a base model must correctly classify \textit{all} $k$ neighbors surrounding the test sample to be eligible for voting. If no model satisfies this unanimous condition, the value of $k$ is iteratively decremented until a qualifying subset is identified.
    \item \textbf{RRC (Randomized Reference Classifier):} Proposed by Woloszynski \cite{woloszynski2011probabilistic} to mitigate the hard-thresholding limitations of traditional $k$-NN methods. RRC introduces a continuous probabilistic evaluation perspective by constructing a randomized reference classifier framework. It utilizes Gaussian kernels as potential functions to assign varying weights based on the Euclidean distance between reference and test samples, effectively smoothing the evaluation of local competence.
    \item \textbf{MCB (Multiple Classifier Behavior):} Proposed by Giacinto et al. \cite{giacinto2001design}, this method performs selection by analyzing the behavioral consistency of base models. Unlike methods focused solely on accuracy, MCB retrieves neighbors in the reference set that exhibit similar ``output profiles'' (vectors composed of all base model predictions) to the current test sample. The premise is that if historical output patterns match the current pattern, the labels of those neighbors should guide the final decision.
    \item \textbf{DES-AS:} A state-of-the-art synergy-based framework proposed by Zhang et al. \cite{zhang2025des}. It utilizes Algorithm Shapley, a game-theoretic variant of the Shapley Value, to measure the synergy effect among classifiers. By calculating the marginal contribution of each model to all possible ensemble permutations, it selects classifiers with positive synergy values for dynamic aggregation.
    \item \textbf{DES-KNN:} This family of methods selects an ensemble for each test instance by jointly considering local accuracy and diversity within its Region of Competence (RoC) \cite{desouto2008dcs}. Concretely, base classifiers are ranked by decreasing accuracy and (separately) by increasing redundancy/diversity estimated on the RoC. The final Ensemble of Competence (EoC) is formed by combining the top-$p_a$ most accurate models with the top-$p_b$ most diverse models (with duplicates removed). In our implementation, the diversity score of a classifier is obtained from its average pairwise diversity to the remaining classifiers on the RoC, using common measures such as Double-Fault (DF), the $Q$-statistic, and the Ratio of Errors (RE).
\end{itemize}

\begin{table}[htbp]
\centering
\caption{Hyperparameter configurations for the base model pool.}
\label{tab:hyperparameters}
\begin{tabular*}{\columnwidth}{@{\extracolsep{\fill}} lll @{}}
\toprule
\textbf{Category} & \textbf{Algorithm} & \textbf{Specific Configurations / Key Parameters} \\ 
\midrule
\multirow{2}{*}{\textbf{Bagging}} & Random Forest & $n\_estimators=100$, $criterion='gini'$ \\
 & Extra Trees & $n\_estimators=100$, $bootstrap=False$ \\ 
\midrule
\multirow{5}{*}{\textbf{Boosting}} & XGBoost & $eval\_metric='logloss'$, $use\_label\_encoder=False$ \\
 & LightGBM & $verbose=-1$ \\
 & CatBoost & $logging\_level='Silent'$, $thread\_count=1$ \\
 & Gradient Boosting & $n\_estimators=100$, $learning\_rate=0.1$ \\
 & AdaBoost & $n\_estimators=50$, $algorithm='SAMME.R'$ \\ 
\midrule
\multirow{2}{*}{\textbf{Linear}} & Logistic Regression & $max\_iter=1000$, $solver='lbfgs'$ \\
 & LDA & Default (SVD solver) \\ 
\midrule
\multirow{4}{*}{\textbf{Others}} & MLP Classifier & $max\_iter=3000$, $early\_stopping=True$ \\
 & KNN & $n\_neighbors=5$, $weights='uniform'$ \\
 & SVC & $probability=True$, $kernel='rbf'$ \\
 & Gaussian NB & Default \\ 
\bottomrule
\end{tabular*}
\end{table}

\section{Experimental Results}\label{6}
In this section, we present the evaluation of the proposed BPE framework. First, Section 6.1 analyzes the experimental results in the heterogeneous ensemble scenario, followed by the evaluation of the homogeneous ensemble setting in Section 6.2. Subsequently, Section 6.3 investigates the hyperparameter sensitivity of the algorithm, focusing primarily on the heterogeneous ensemble. This focus is motivated by two key factors: (1) heterogeneous ensembles generally yield superior accuracy, a phenomenon consistently observed both in our experiments and in the broader ensemble learning literature; and (2) dynamic ensemble algorithms inherently introduce additional computational and storage overheads. This intrinsic cost conflicts with the primary advantages of homogeneous ensembles, which are typically favored for their computational efficiency and ease of deployment. Consequently, the homogeneous setting is included in our experiments primarily as an auxiliary analysis.

\subsection{Heterogeneous ensemble}
In the heterogeneous setting, we construct the model pool by combining diverse learning algorithms with the hyperparameter configurations summarized in Table\ref{tab:hyperparameters}.

The proposed BPE framework demonstrates consistent superiority over competitive static and dynamic baselines. As detailed in Table \ref{tab:heterogeneous_acc}, BPE achieves the highest average classification accuracy of 87.17\%, outperforming the best single classifier (86.78\%) and the strongest baseline ensemble, RRC (87.08\%). The overall advantage of BPE is further corroborated by the Friedman ranking analysis in Fig. \ref{fig:friedman_hetero}: BPE obtains the best (lowest) average rank of 2.167, substantially ahead of the runner-up RRC (4.310).

To validate the statistical significance of these improvements, Wilcoxon signed-rank tests were conducted at a significance level of $\alpha = 0.05$. For each dataset, we report the mean accuracy over 50 runs, and the tests were performed across the 42 datasets using these per-dataset means. The results, summarized in Table \ref{tab:heterogeneous_oof_wilcoxon}, show that the null hypothesis is rejected for all comparisons between BPE and the baselines (all $p < 0.05$), indicating that BPE provides statistically significant gains over both static fusion rules and neighborhood-based DES/DCS methods under the same OOF protocol.

\begin{table}[htbp]
\centering
\caption{Average classification accuracy obtained by BPE and baseline models over 42 datasets in \textbf{heterogeneous (OOF)} ensemble case. The best results are highlighted in \textbf{boldface}.}
\label{tab:heterogeneous_acc}
\setlength{\tabcolsep}{2pt}
\resizebox{\textwidth}{!}{
\begin{tabular}{l cccc cc ccccccc c}
\toprule
Dataset & \multicolumn{4}{c}{Static} & \multicolumn{2}{c}{DCS} & \multicolumn{7}{c}{DES} & \multicolumn{1}{c}{BPE} \\
\cmidrule(lr){2-5} \cmidrule(lr){6-7} \cmidrule(lr){8-14} \cmidrule(lr){15-15}
& SB & SA & MA & WA & LCA & MCB & DES-AS & DF & Q & RE & KNE & KNU & RRC & Entropy \\
\midrule
D1 & 78.98 & 78.92 & \textbf{79.38} & 78.98 & 79.34 & \textbf{79.38} & 78.98 & 79.15 & 79.08 & 78.33 & 79.18 & 79.25 & \textbf{79.38} & 79.28 \\
D2 & 84.88 & 85.58 & 85.67 & 86.00 & \textbf{86.19} & 85.63 & 85.72 & 85.30 & 85.44 & 84.93 & 85.53 & 85.77 & 86.09 & 85.86 \\
D3 & 82.81 & 84.07 & 83.98 & \textbf{84.09} & 83.95 & 83.77 & 83.86 & 83.84 & 83.73 & 83.48 & 84.00 & 83.86 & 83.93 & 84.07 \\
D4 & 86.99 & 87.26 & 87.29 & 87.30 & 87.34 & 87.19 & \textbf{87.37} & 87.17 & 87.18 & 86.86 & 87.23 & \textbf{87.37} & \textbf{87.37} & 87.32 \\
D5 & 97.13 & 97.29 & \textbf{97.32} & 97.29 & 97.29 & 97.26 & 97.30 & 97.24 & 97.24 & 97.23 & 97.26 & 97.29 & 97.30 & 97.30 \\
D6 & 93.20 & 93.20 & 93.22 & 93.19 & 93.23 & 93.30 & 93.25 & 93.15 & 93.25 & 93.00 & 93.29 & 93.24 & 93.22 & \textbf{93.40} \\
D7 & 82.85 & 83.36 & 83.45 & 83.36 & 83.45 & 83.02 & 83.32 & 83.49 & 83.45 & 83.49 & 83.23 & 83.36 & 83.49 & \textbf{83.74} \\
D8 & \textbf{93.19} & 92.81 & 92.91 & 92.83 & 92.89 & 92.96 & 93.00 & 92.83 & 92.81 & 92.45 & 92.94 & 92.96 & 92.91 & 93.13 \\
D9 & 91.90 & 92.04 & 91.95 & 92.04 & 92.08 & 92.05 & 92.06 & 91.97 & 91.99 & 91.91 & 92.02 & \textbf{92.09} & 92.08 & 92.07 \\
D10 & 98.99 & 98.99 & 98.99 & 98.99 & 98.99 & 98.99 & 98.99 & 98.99 & 98.99 & 98.98 & 98.99 & \textbf{99.01} & 98.99 & 98.99 \\
D11 & 87.92 & 88.10 & 87.79 & 88.11 & 88.02 & 87.45 & 87.64 & 87.56 & 87.37 & 87.05 & 88.06 & 87.83 & 88.02 & \textbf{88.13} \\
D12 & 80.87 & 80.65 & \textbf{80.91} & 80.63 & 80.49 & 80.26 & 80.41 & 80.31 & 80.26 & 79.99 & 80.65 & 80.38 & 80.51 & 80.65 \\
D13 & 99.35 & 99.42 & 99.42 & 99.42 & 99.42 & 99.41 & 99.41 & 99.40 & 99.40 & 99.34 & 99.40 & 99.42 & 99.42 & \textbf{99.43} \\
D14 & 99.39 & \textbf{99.46} & 99.45 & \textbf{99.46} & \textbf{99.46} & 99.44 & 99.45 & 99.43 & 99.43 & 99.38 & 99.44 & \textbf{99.46} & \textbf{99.46} & \textbf{99.46} \\
D15 & 90.65 & 91.18 & 91.20 & 91.19 & 91.23 & 91.09 & \textbf{91.24} & 90.94 & 90.95 & 90.79 & 91.08 & 91.18 & 91.23 & 91.19 \\
D16 & 89.70 & 89.99 & 89.91 & 89.98 & 89.99 & 89.85 & 89.83 & 89.84 & 89.88 & 89.86 & 89.93 & 89.88 & 89.98 & \textbf{90.06} \\
D17 & 81.24 & 81.49 & 81.45 & 81.49 & 81.66 & 81.64 & 81.67 & 81.70 & 81.63 & 81.68 & 81.63 & 81.73 & 81.73 & \textbf{81.89} \\
D18 & 83.38 & 83.63 & \textbf{83.77} & 83.62 & 83.62 & 83.44 & \textbf{83.77} & 83.53 & 83.46 & 83.50 & 83.64 & 83.59 & 83.68 & 83.64 \\
D19 & 85.54 & 86.47 & 86.30 & 86.47 & 86.45 & 86.28 & 86.43 & 86.31 & 86.38 & 86.36 & 86.50 & 86.48 & \textbf{86.54} & 86.53 \\
D20 & 88.84 & 89.50 & 89.50 & 89.50 & \textbf{89.53} & 89.21 & 89.39 & 89.26 & 88.92 & 88.61 & 89.37 & 89.50 & 89.45 & 89.50 \\
D21 & \textbf{90.23} & 90.06 & 90.04 & 90.06 & 90.01 & 89.95 & 89.98 & 89.86 & 89.90 & 89.85 & 90.03 & 90.00 & 90.01 & 90.13 \\
D22 & 90.25 & 90.36 & \textbf{90.37} & 90.36 & 90.34 & 90.26 & 90.35 & 90.32 & 90.30 & 90.33 & 90.30 & 90.34 & 90.34 & \textbf{90.37} \\
D23 & 81.87 & 81.60 & 81.73 & 81.53 & 81.33 & 80.80 & 81.33 & 81.20 & 81.07 & 80.73 & 81.60 & 81.00 & 81.53 & \textbf{82.00} \\
D24 & 40.25 & 42.13 & 41.81 & 42.50 & 42.50 & 42.31 & 42.00 & 41.75 & 42.06 & 41.94 & 42.06 & 42.38 & \textbf{42.63} & 42.31 \\
D25 & 91.01 & 91.01 & 91.02 & 91.01 & 91.00 & 90.93 & 90.95 & 90.95 & 90.99 & 91.09 & 90.99 & 91.02 & 91.00 & \textbf{91.10} \\
D26 & 70.82 & 71.05 & 70.96 & 71.09 & 71.31 & 71.01 & 70.96 & 70.67 & 70.72 & 70.41 & 70.93 & 71.29 & 71.29 & \textbf{71.39} \\
D27 & 94.32 & 94.50 & 94.49 & 94.50 & 94.52 & 94.50 & \textbf{94.53} & \textbf{94.53} & \textbf{94.53} & 94.50 & 94.49 & 94.51 & 94.52 & 94.52 \\
D28 & \textbf{90.64} & 90.45 & 90.46 & 90.46 & 90.45 & 90.44 & 90.43 & 90.39 & 90.40 & 90.16 & 90.47 & 90.48 & 90.45 & 90.58 \\
D29 & 87.04 & 87.59 & 87.41 & 87.56 & 87.55 & 87.39 & \textbf{87.62} & 87.46 & 87.50 & 87.55 & 87.53 & 87.59 & 87.56 & \textbf{87.62} \\
D30 & 97.78 & \textbf{97.82} & 97.79 & \textbf{97.82} & \textbf{97.82} & 97.80 & 97.81 & 97.77 & 97.78 & 97.62 & 97.80 & \textbf{97.82} & \textbf{97.82} & \textbf{97.82} \\
D31 & 90.76 & 91.55 & 91.42 & 91.50 & 91.52 & 91.23 & 91.33 & 91.08 & 91.09 & 91.09 & 91.33 & 91.41 & 91.48 & \textbf{91.59} \\
D32 & 67.84 & 68.07 & 68.08 & 68.08 & 68.19 & 68.34 & 68.29 & 67.92 & 67.94 & 67.84 & 68.09 & 68.30 & 68.20 & \textbf{68.40} \\
D33 & 79.81 & 80.12 & 80.07 & 80.14 & 80.28 & 80.14 & 80.32 & 80.12 & 80.16 & 79.94 & 80.28 & 80.36 & 80.30 & \textbf{80.70} \\
D34 & 96.29 & \textbf{96.42} & 96.39 & \textbf{96.42} & \textbf{96.42} & 96.39 & \textbf{96.42} & 96.32 & 96.33 & 96.16 & 96.35 & 96.41 & \textbf{96.42} & \textbf{96.42} \\
D35 & 99.83 & 99.21 & 99.17 & \textbf{99.85} & 99.25 & 99.25 & 99.82 & 99.20 & 99.20 & 99.19 & 99.25 & 99.25 & 99.25 & \textbf{99.85} \\
D36 & \textbf{100.00} & \textbf{100.00} & \textbf{100.00} & \textbf{100.00} & \textbf{100.00} & \textbf{100.00} & \textbf{100.00} & \textbf{100.00} & \textbf{100.00} & \textbf{100.00} & \textbf{100.00} & \textbf{100.00} & \textbf{100.00} & \textbf{100.00} \\
D37 & 92.84 & 92.80 & 92.78 & 92.79 & 92.73 & 92.68 & 92.74 & 92.54 & 92.58 & 92.41 & 92.74 & 92.73 & 92.74 & \textbf{92.92} \\
D38 & 80.65 & 81.20 & 81.21 & 81.19 & 81.17 & 81.00 & 81.03 & 80.69 & 80.67 & 80.50 & 81.10 & 81.14 & 81.17 & \textbf{81.29} \\
D39 & 99.43 & 99.59 & 99.59 & 99.60 & 99.62 & 99.62 & \textbf{99.67} & 99.55 & 99.54 & 99.40 & 99.58 & 99.62 & 99.62 & 99.62 \\
D40 & 85.86 & 86.58 & 86.42 & 86.58 & 86.46 & 86.46 & 86.54 & 86.41 & 86.40 & 86.15 & 86.60 & 86.47 & 86.46 & \textbf{86.76} \\
D41 & 81.59 & 82.21 & 82.17 & 82.21 & 82.23 & 81.89 & 82.23 & 81.98 & 81.99 & 81.19 & 81.99 & 82.22 & 82.23 & \textbf{82.26} \\
D42 & 57.68 & 57.52 & \textbf{58.14} & 57.57 & 57.64 & 57.52 & 57.55 & 57.07 & 56.73 & 56.80 & 57.55 & 57.73 & 57.61 & 57.89 \\
\midrule
Average & 86.78 & 87.03 & 87.03 & 87.07 & 87.07 & 86.94 & 87.02 & 86.89 & 86.87 & 86.72 & 87.01 & 87.04 & 87.08 & \textbf{87.17} \\
\bottomrule
\end{tabular}
}
\end{table}

\begin{table}[htbp]
\centering
\caption{Wilcoxon signed-rank test results for comparing BPE with thirteen baseline models in \textbf{heterogeneous (OOF)} ensemble case.}
\label{tab:heterogeneous_oof_wilcoxon}
\begin{tabular*}{\textwidth}{@{\extracolsep{\fill}} ll cccc }
\toprule
Ensemble type & Comparison & $R^+$ & $R^-$ & Hypothesis & $p$-value \\
\midrule
\multirow{4}{*}{Static} 
 & BPE vs. SB     & 784.0 & 36.0  & Rejected for BPE at 5\% & 0.0000** \\
 & BPE vs. SA   & 698.0 & 5.0   & Rejected for BPE at 5\% & 0.0000** \\
 & BPE vs. MA   & 681.0 & 99.0  & Rejected for BPE at 5\% & 0.0000** \\
 & BPE vs. WA & 706.0 & 74.0  & Rejected for BPE at 5\% & 0.0000** \\
\midrule
\multirow{2}{*}{DCS}    
 & BPE vs. LCA      & 631.0 & 110.0 & Rejected for BPE at 5\% & 0.0002** \\
 & BPE vs. MCB      & 731.0 & 10.0  & Rejected for BPE at 5\% & 0.0000** \\
\midrule
\multirow{7}{*}{DES}    
 & BPE vs. DES-AS   & 718.0 & 62.0  & Rejected for BPE at 5\% & 0.0000** \\
 & BPE vs. DF       & 819.0 & 1.0   & Rejected for BPE at 5\% & 0.0000** \\
 & BPE vs. Q        & 819.0 & 1.0   & Rejected for BPE at 5\% & 0.0000** \\
 & BPE vs. RE       & 861.0 & 0.0   & Rejected for BPE at 5\% & 0.0000** \\
 & BPE vs. KNE      & 741.0 & 0.0   & Rejected for BPE at 5\% & 0.0000** \\
 & BPE vs. KNU      & 699.0 & 42.0  & Rejected for BPE at 5\% & 0.0000** \\
 & BPE vs. RRC      & 610.0 & 131.0 & Rejected for BPE at 5\% & 0.0005** \\
\bottomrule
\addlinespace
\multicolumn{6}{l}{** Close to the $p$-value means that statistical differences are found with $\alpha = 0.05$.} \\
\end{tabular*}
\end{table}

\begin{figure}[t]
    \centering
    \includegraphics[width=0.9\linewidth]{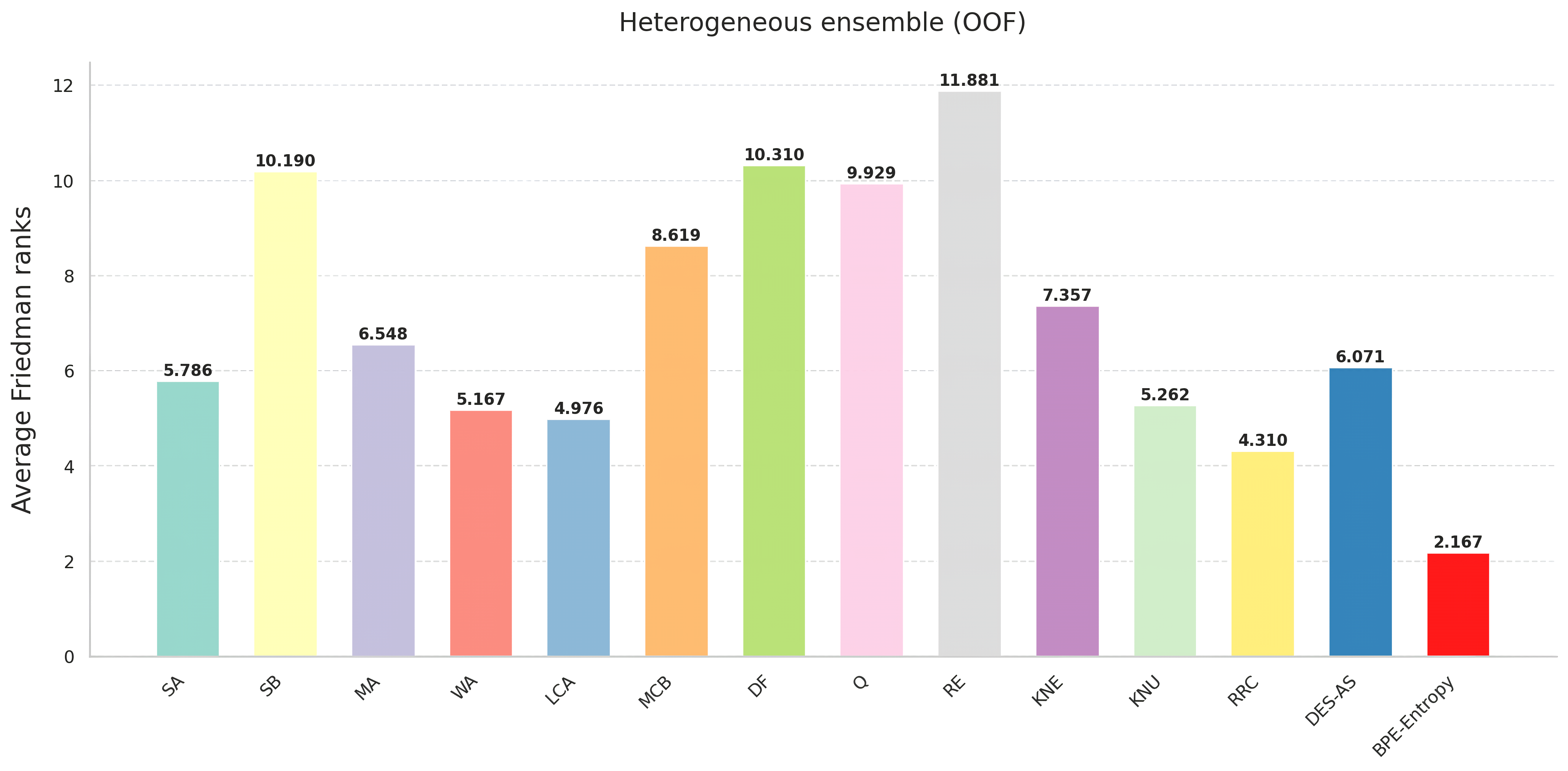}
    \caption{Average Friedman ranks of BPE and thirteen baseline models in \textbf{heterogeneous} ensemble case.}
    \label{fig:friedman_hetero}
\end{figure}

\begin{figure}[t]
    \centering
    \includegraphics[width=0.9\linewidth]{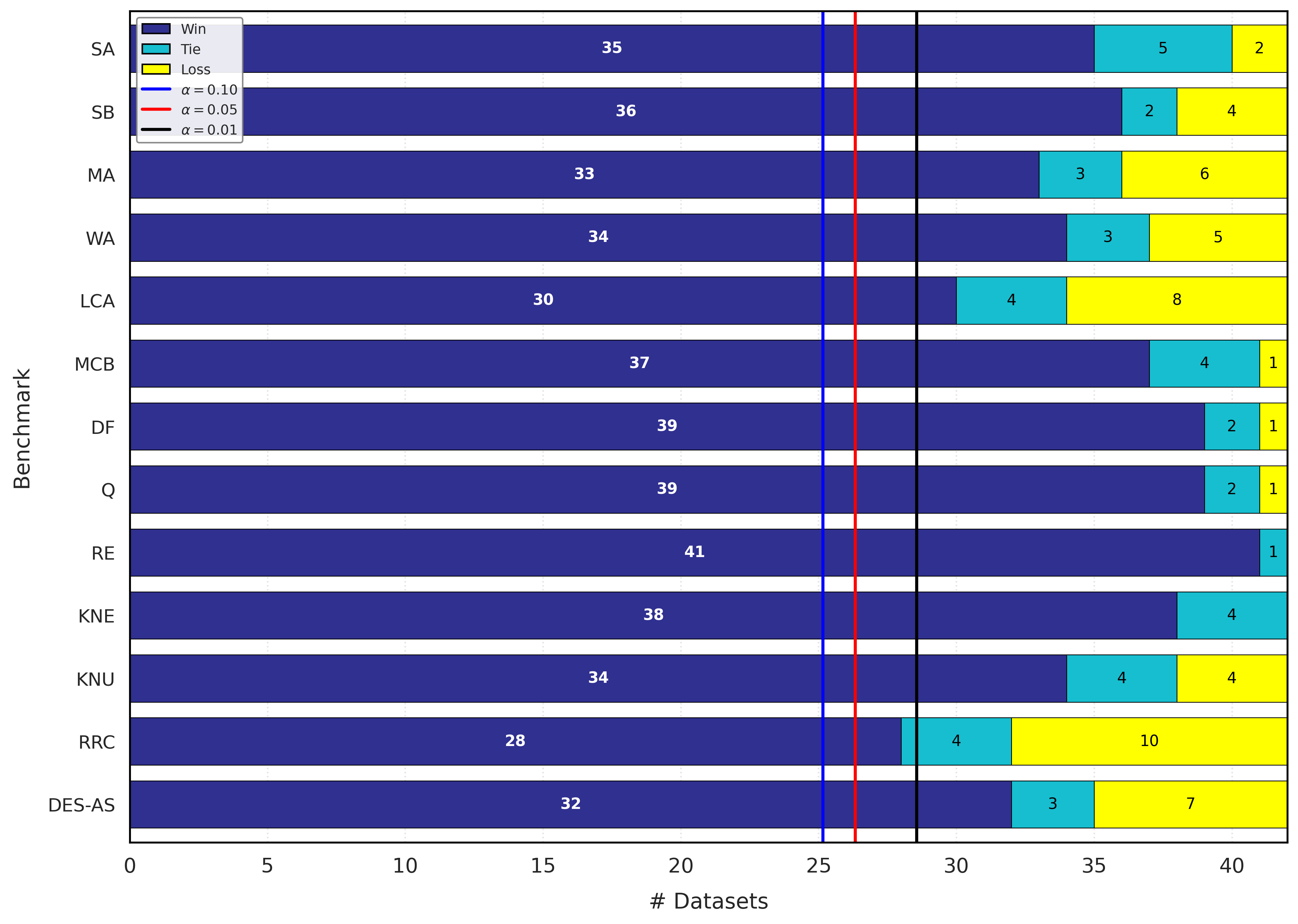}
    \caption{Win-tie-loss distribution of the average classification accuracy of the proposed BPE vs. thirteen baseline models over 42 datasets in \textbf{heterogeneous} ensemble scenario. Each full line illustrates the critical value $n_c = \{24.05, 25.20, 27.36\}$ considering confidence level of $\alpha = \{0.10, 0.05, 0.01\}$.}
    \label{fig:wtl_hetero}
\end{figure}

\subsection{Homogeneous ensemble}
In the homogeneous setting, the pool consists of $M=40$ decision trees trained with identical hyperparameters, where diversity is introduced solely by Bagging (each tree is fitted on a different bootstrap resample of the training set).

In this Bagging-only diversity setting, BPE remains highly competitive and achieves the best overall performance. As shown in Table \ref{tab:homogeneous_oof_acc}, BPE achieves the highest average accuracy of 84.06\%, slightly outperforming the strongest static baselines (Simple Average and Weighted Average, both 84.00\%). The Friedman test results in Fig. \ref{fig:homogeneous_friedman} provide a consistent conclusion: BPE achieves the best average rank of 2.881, followed by Weighted Average with an average rank of 3.452.

The statistical analysis presented in Table \ref{tab:homogeneous_oof_wilcoxon} reveals a pattern distinct from the heterogeneous case. BPE is statistically significantly better than the best single classifier ($p=0.0000$), Median Average ($p=0.0000$), Weighted Average ($p=0.0246$), and all evaluated DES/DCS baselines (all $p < 0.05$). However, the null hypothesis is not rejected when comparing BPE with Simple Average ($p=0.2735$). This result is expected in homogeneous pools: Bagging-style averaging is already a particularly strong and stable fusion rule for decision trees, so additional instance-wise weighting may yield only marginal gains on average while still maintaining strong competitiveness.

\begin{table}[htbp]
\centering
\caption{Average classification accuracy obtained by BPE and baseline models over 42 datasets in \textbf{homogeneous (OOF)} ensemble case. The best results are highlighted in \textbf{bold-face}.}
\label{tab:homogeneous_oof_acc}
\setlength{\tabcolsep}{2pt}
\scriptsize
\resizebox{\textwidth}{!}{
\begin{tabular}{l cccc cc ccccccc c}
\toprule
Dataset & \multicolumn{4}{c}{Static} & \multicolumn{2}{c}{DCS} & \multicolumn{7}{c}{DES} & \multicolumn{1}{c}{BPE} \\
\cmidrule(lr){2-5} \cmidrule(lr){6-7} \cmidrule(lr){8-14} \cmidrule(lr){15-15}
 & SB & SA & MA & WA & LCA & MCB & DES-AS & DF & Q & RE & KNE & KNU & RRC & Entropy \\
\midrule
D1 & 63.41 & \textbf{71.05} & 70.92 & 70.62 & 70.39 & 70.20 & 68.82 & 67.87 & 69.15 & 68.36 & 69.90 & 69.90 & 70.23 & 70.85 \\
D2 & 78.28 & 80.88 & 80.19 & \textbf{80.93} & 80.56 & 80.28 & 80.19 & 79.35 & 79.26 & 79.91 & 80.37 & 80.51 & 80.79 & 80.88 \\
D3 & 77.95 & 81.98 & 81.51 & 82.09 & 82.11 & 81.77 & 81.82 & 80.65 & 80.77 & 80.02 & 81.30 & 82.00 & 82.13 & \textbf{82.31} \\
D4 & 82.27 & \textbf{85.79} & 85.54 & 85.77 & 85.72 & 85.27 & 85.58 & 84.39 & 84.67 & 84.35 & 84.66 & 85.60 & 85.70 & 85.70 \\
D5 & 96.16 & 97.06 & 97.01 & \textbf{97.07} & \textbf{97.07} & 97.06 & 97.03 & 96.89 & 96.93 & 96.84 & 96.91 & 97.05 & \textbf{97.07} & 97.04 \\
D6 & 85.79 & 89.94 & 89.71 & \textbf{90.01} & 89.85 & 89.84 & 89.80 & 89.20 & 89.39 & 88.94 & 89.10 & 89.68 & 89.83 & 89.96 \\
D7 & 77.49 & 79.66 & 79.57 & 80.51 & 79.96 & 80.26 & 79.83 & 79.36 & 79.45 & 78.55 & 79.40 & 80.17 & 79.91 & \textbf{80.98} \\
D8 & 86.87 & 89.45 & 89.19 & \textbf{89.55} & 89.51 & 89.51 & \textbf{89.55} & 88.96 & 89.08 & 88.36 & 88.83 & 89.53 & 89.47 & \textbf{89.55} \\
D9 & 84.10 & \textbf{89.64} & 89.57 & \textbf{89.64} & 89.58 & 89.55 & 89.23 & 88.97 & 89.14 & 89.06 & 88.37 & 89.45 & 89.58 & 89.45 \\
D10 & 80.36 & 86.38 & 83.72 & 86.86 & 86.44 & 86.65 & 85.65 & 84.72 & 84.78 & 84.12 & 85.47 & 85.81 & 86.54 & \textbf{87.74} \\
D11 & 77.54 & 81.96 & 81.98 & 82.42 & \textbf{82.69} & 81.79 & 82.08 & 80.23 & 80.88 & 80.57 & 81.28 & 82.21 & \textbf{82.69} & 82.44 \\
D12 & 70.66 & 76.22 & 75.97 & 76.26 & 75.81 & 74.97 & 74.91 & 72.72 & 73.26 & 72.88 & 73.97 & 75.54 & 75.87 & \textbf{76.43} \\
D13 & 98.64 & 99.04 & 99.03 & 99.05 & \textbf{99.06} & \textbf{99.06} & 99.03 & 98.94 & 98.95 & 98.87 & 98.97 & \textbf{99.06} & \textbf{99.06} & \textbf{99.06} \\
D14 & 98.73 & 99.12 & 99.12 & 99.13 & 99.13 & \textbf{99.14} & 99.12 & 99.03 & 99.04 & 98.99 & 99.02 & 99.13 & \textbf{99.14} & 99.13 \\
D15 & 87.72 & \textbf{89.42} & 89.37 & 89.26 & 89.24 & 89.19 & 89.07 & 89.01 & 89.00 & 88.95 & 88.95 & 89.24 & 89.23 & 89.41 \\
D16 & 85.44 & 88.57 & 88.49 & 88.46 & 88.39 & 88.29 & 88.20 & 88.20 & 88.24 & 87.99 & 88.05 & 88.45 & 88.39 & \textbf{88.60} \\
D17 & 74.88 & 81.31 & 81.18 & 81.31 & 81.31 & 81.19 & 81.13 & 81.09 & 81.17 & 81.12 & 79.72 & 81.28 & 81.31 & \textbf{81.41} \\
D18 & 79.38 & 82.60 & 82.37 & 82.45 & 82.17 & 82.20 & 81.69 & 81.40 & 81.57 & 81.13 & 82.05 & 82.09 & 82.14 & \textbf{82.68} \\
D19 & 81.65 & 85.14 & 84.99 & 85.07 & 84.89 & 84.68 & 84.87 & 84.68 & 84.69 & 84.59 & 84.48 & 84.82 & 84.89 & \textbf{85.24} \\
D20 & 84.34 & \textbf{88.21} & 88.13 & 88.08 & 88.11 & 87.79 & 87.50 & 87.58 & 87.55 & 87.50 & 87.76 & 87.95 & 87.95 & 88.18 \\
D21 & 87.53 & 89.58 & 89.40 & \textbf{89.59} & 89.55 & 89.43 & 89.39 & 89.21 & 89.26 & 89.21 & 89.38 & 89.47 & 89.55 & 89.58 \\
D22 & 87.34 & 89.79 & 89.72 & 89.78 & 89.81 & 89.76 & 89.76 & 89.65 & 89.72 & 89.69 & 89.32 & 89.79 & 89.81 & \textbf{89.84} \\
D23 & 73.40 & 76.33 & 75.93 & 76.27 & 75.93 & \textbf{76.67} & 75.47 & 75.33 & 76.00 & 75.47 & 74.73 & 75.73 & 76.13 & 76.47 \\
D24 & 29.94 & \textbf{37.44} & 37.31 & 36.38 & 36.19 & 36.25 & 33.87 & 34.50 & 35.19 & 35.06 & 37.38 & 34.75 & 35.81 & 37.00 \\
D25 & 86.96 & 89.96 & 89.77 & 90.14 & 90.25 & 90.27 & 90.04 & 88.99 & 89.00 & 88.53 & 89.16 & 90.28 & 90.27 & \textbf{90.33} \\
D26 & 65.92 & 68.81 & \textbf{68.95} & 68.49 & 68.57 & 68.34 & 68.56 & 68.26 & 68.15 & 67.98 & 68.35 & 68.65 & 68.56 & 68.65 \\
D27 & 91.63 & \textbf{93.76} & 93.68 & 93.68 & 93.68 & 93.63 & 93.53 & 93.46 & 93.45 & 93.29 & 93.39 & 93.62 & 93.66 & 93.69 \\
D28 & 83.47 & 88.47 & 88.38 & 88.47 & 88.53 & 88.41 & 88.21 & 88.04 & 88.14 & 88.11 & 87.05 & 88.46 & \textbf{88.56} & 88.45 \\
D29 & 80.33 & \textbf{84.53} & 84.37 & 84.38 & 84.40 & 84.33 & 83.98 & 83.32 & 83.29 & 83.16 & 83.33 & 84.25 & 84.42 & 84.34 \\
D30 & 87.31 & 93.55 & 93.55 & 93.52 & 93.54 & 93.38 & 93.29 & 92.83 & 92.96 & 92.93 & 91.96 & 93.47 & 93.53 & \textbf{93.57} \\
D31 & 87.53 & 89.68 & 89.55 & 89.74 & 90.00 & 89.70 & 89.77 & 89.33 & 89.29 & 88.42 & 89.26 & 89.94 & 89.98 & \textbf{90.24} \\
D32 & 47.26 & 63.76 & 63.41 & \textbf{63.77} & 63.57 & 63.57 & 62.31 & 62.66 & 63.30 & 62.86 & 61.16 & 63.02 & 63.59 & 63.04 \\
D33 & 67.04 & 72.94 & 72.46 & 72.99 & \textbf{73.05} & 72.51 & 71.60 & 70.20 & 70.81 & 70.90 & 71.88 & 72.47 & 73.04 & 72.91 \\
D34 & 89.30 & 93.50 & 93.32 & 93.49 & 93.53 & 93.47 & 93.43 & 92.73 & 92.82 & 92.56 & 92.55 & 93.53 & \textbf{93.54} & 93.41 \\
D35 & 99.71 & 99.70 & 99.69 & 99.71 & 99.71 & 99.71 & 99.71 & 99.69 & 99.69 & \textbf{99.72} & 99.70 & 99.71 & 99.71 & 99.71 \\
D36 & 94.57 & 98.57 & 98.11 & 98.69 & 98.71 & 98.47 & 98.62 & 97.47 & 97.51 & 96.75 & 97.44 & 98.68 & 98.68 & \textbf{98.75} \\
D37 & 89.91 & 92.01 & 91.85 & 92.05 & 91.97 & 91.76 & 91.96 & 91.46 & 91.41 & 91.26 & 91.76 & 91.94 & 91.97 & \textbf{92.22} \\
D38 & 75.83 & \textbf{79.34} & 78.97 & 79.31 & 79.24 & 79.17 & 79.13 & 78.79 & 78.87 & 78.47 & 78.83 & 79.26 & 79.24 & 79.33 \\
D39 & 76.17 & 87.39 & 87.00 & 87.21 & 87.05 & \textbf{87.43} & 85.95 & 84.19 & 84.88 & 84.24 & 84.61 & 86.34 & 86.89 & 86.80 \\
D40 & 82.05 & 85.16 & 84.97 & 85.23 & 85.11 & 85.00 & 84.99 & 84.37 & 84.54 & 84.56 & 84.47 & 85.09 & 85.15 & \textbf{85.25} \\
D41 & 69.28 & 76.23 & 75.99 & 76.31 & 76.11 & 75.25 & 74.72 & 73.14 & 73.46 & 73.37 & 73.97 & 75.44 & 76.09 & \textbf{76.45} \\
D42 & 44.82 & \textbf{54.14} & 52.52 & 54.07 & 53.07 & 53.02 & 51.32 & 51.09 & 51.86 & 51.16 & 53.61 & 52.25 & 52.86 & 53.30 \\
\midrule
Average & 79.74 & 84.00 & 83.73 & 84.00 & 83.89 & 83.77 & 83.45 & 82.90 & 83.11 & 82.83 & 83.14 & 83.70 & 83.88 & \textbf{84.06} \\
\bottomrule
\end{tabular}
}
\end{table}

\begin{table}[htbp]
\centering
\caption{Wilcoxon signed-rank test results for comparing BPE with thirteen baseline models in \textbf{homogeneous (OOF)} ensemble case.}
\label{tab:homogeneous_oof_wilcoxon}
\begin{tabular*}{\textwidth}{@{\extracolsep{\fill}} ll cccc }
\toprule
Ensemble type & Comparison & $R^+$ & $R^-$ & Hypothesis & $p$-value \\
\midrule
\multirow{4}{*}{Static} 
 & BPE vs. SB     & 902.0 & 1.0   & Rejected for BPE at 5\% & 0.0000** \\
 & BPE vs. SA   & 515.0 & 346.0 & Not Rejected            & 0.2735 \\
 & BPE vs. MA   & 780.0 & 123.0 & Rejected for BPE at 5\% & 0.0000** \\
 & BPE vs. WA & 604.0 & 257.0 & Rejected for BPE at 5\% & 0.0246** \\
\midrule
\multirow{2}{*}{DCS}    
 & BPE vs. LCA      & 703.0 & 200.0 & Rejected for BPE at 5\% & 0.0013** \\
 & BPE vs. MCB      & 790.0 & 113.0 & Rejected for BPE at 5\% & 0.0000** \\
\midrule
\multirow{7}{*}{DES}    
 & BPE vs. DES-AS   & 857.0 & 4.0   & Rejected for BPE at 5\% & 0.0000** \\
 & BPE vs. DF       & 903.0 & 0.0   & Rejected for BPE at 5\% & 0.0000** \\
 & BPE vs. Q        & 895.0 & 8.0   & Rejected for BPE at 5\% & 0.0000** \\
 & BPE vs. RE       & 902.0 & 1.0   & Rejected for BPE at 5\% & 0.0000** \\
 & BPE vs. KNE      & 886.0 & 17.0  & Rejected for BPE at 5\% & 0.0000** \\
 & BPE vs. KNU      & 798.0 & 22.0  & Rejected for BPE at 5\% & 0.0000** \\
 & BPE vs. RRC      & 650.0 & 170.0 & Rejected for BPE at 5\% & 0.0013** \\
\bottomrule
\addlinespace
\multicolumn{6}{l}{** Close to the $p$-value means that statistical differences are found with $\alpha = 0.05$.} \\
\end{tabular*}
\end{table}

\begin{figure}[t]
    \centering
    \includegraphics[width=0.9\linewidth]{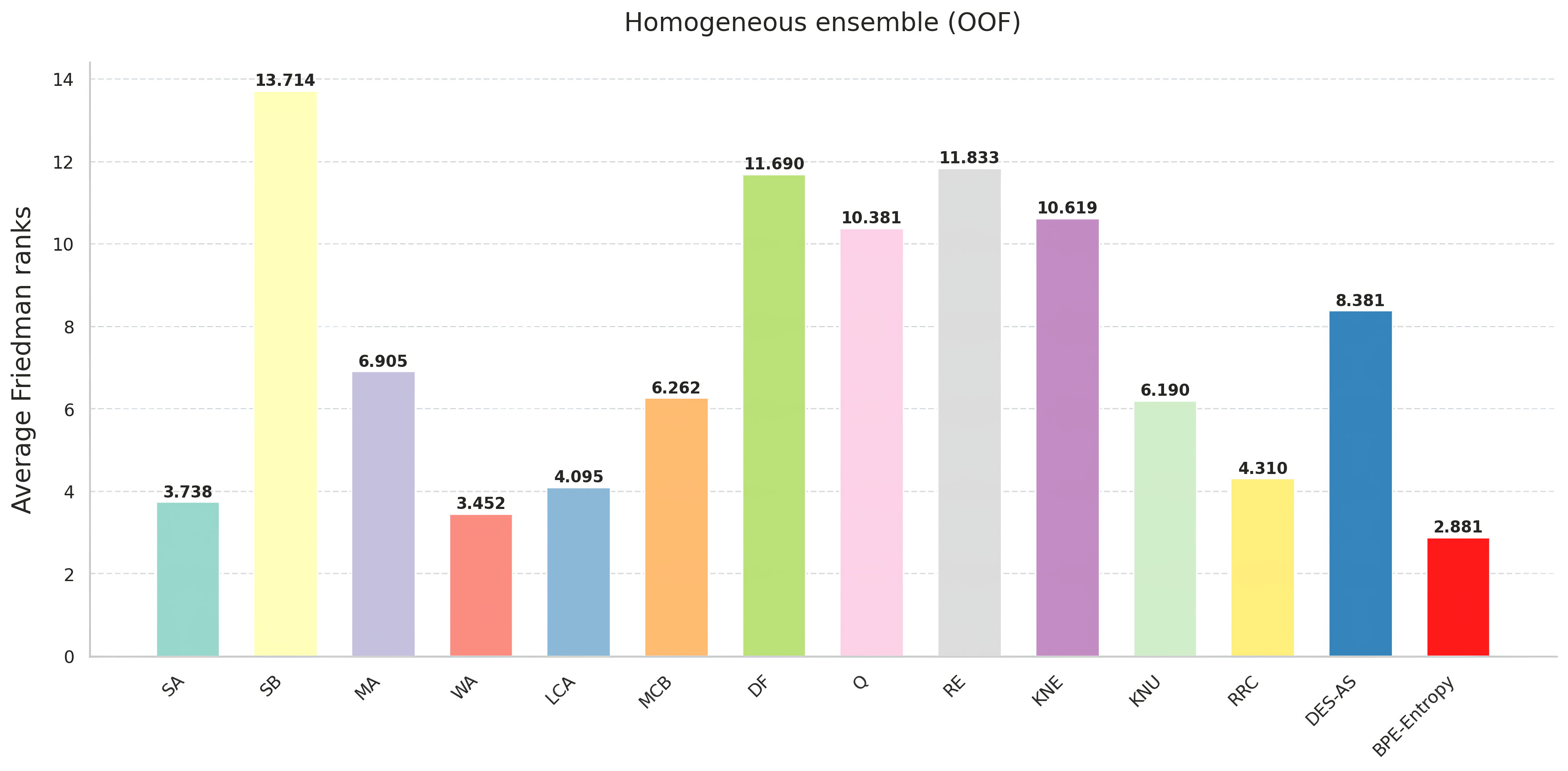}
    \caption{Average Friedman ranks of BPE and thirteen baseline models in \textbf{homogeneous} ensemble case.}
    \label{fig:homogeneous_friedman}
\end{figure}

\begin{figure}[t]
    \centering
    \includegraphics[width=0.9\linewidth]{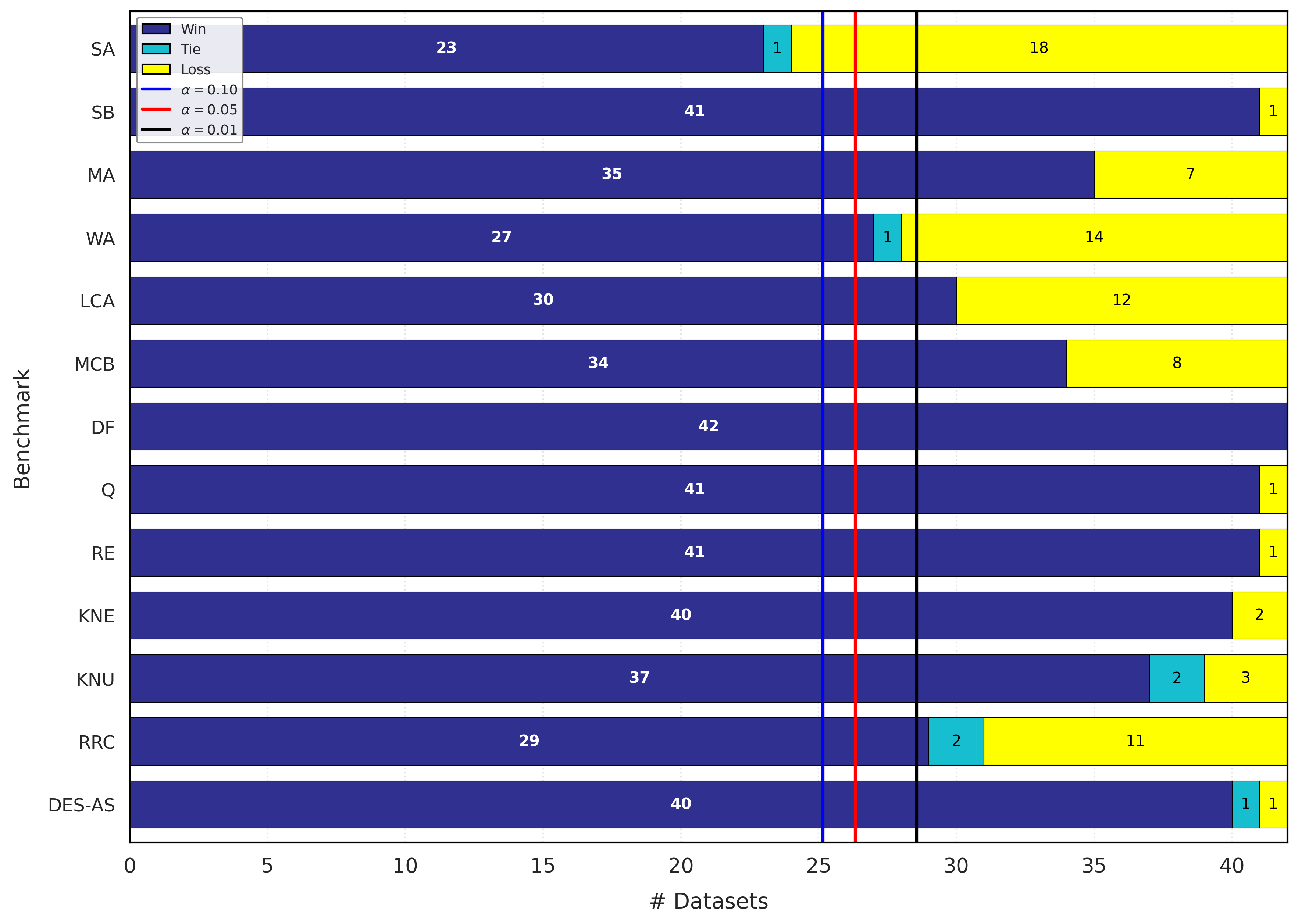}
    \caption{Win-tie-loss distribution of the average classification accuracy of the proposed BPE vs. thirteen baseline models over 42 datasets in \textbf{homogeneous} ensemble scenario. Each full line illustrates the critical value $n_c = \{24.05, 25.20, 27.36\}$ considering confidence level of $\alpha = \{0.10, 0.05, 0.01\}$.}
    \label{fig:wtl_homo}
\end{figure}

\subsection{Hyperparameter Sensitivity Analysis}
We conduct a sensitivity analysis for BPE-Entropy, and additionally study the tolerance parameter $\alpha$ used in our screening protocol. In practical applications, strong performance is often obtained by ensembling multiple high-capacity learners; empirically, our heterogeneous ensemble setting achieves higher accuracy than the homogeneous weak-learner setting. Therefore, all sensitivity results in this subsection are reported under the heterogeneous ensemble protocol.

We focus on two key hyperparameters of BPE-Entropy: the perturbation scale $\delta$ used in offline profiling and the sensitivity factor $\lambda$ used in the exponential mapping from normalized scores to aggregation weights. We also vary the screening tolerance $\alpha$ in the experimental protocol, which controls how aggressively clearly underperforming base learners are pruned when forming the candidate pool; thus, each $\alpha$ induces a different base-model pool. Unless otherwise specified, we vary one hyperparameter at a time, keep the remaining settings identical to the default protocol, and report the average classification accuracy over the 42 datasets.

\begin{table}[htbp]
\centering
\caption{Detailed accuracy analysis with different values of Sensitivity Factor $\lambda$ and Perturbation Scale $\delta$. The default settings are $\lambda=1$ and $\delta=0.5$. The best results are highlighted in \textbf{bold-face}.}
\label{tab:hyperparam_analysis}
\small
\begin{tabular*}{\textwidth}{@{\extracolsep{\fill}} l cccc c cccc }
\toprule
\multirow{2}{*}{Dataset} & \multicolumn{4}{c}{Sensitivity Factor $\lambda$} & \multirow{2}{*}{Default} & \multicolumn{4}{c}{Perturbation Scale $\delta$} \\
\cmidrule(lr){2-5} \cmidrule(lr){7-10}
 & 0.5 & 0.7 & 1.2 & 1.5 & ($\lambda=1, \delta=0.5$) & 0.1 & 0.3 & 0.7 & 1.0 \\
\midrule
D1 & 0.7925 & 0.7931 & 0.7934 & 0.7928 & 0.7928 & 0.7892 & 0.7921 & 0.7931 & \textbf{0.7941} \\
D2 & 0.8600 & 0.8595 & 0.8595 & 0.8591 & 0.8586 & 0.8567 & 0.8586 & 0.8586 & \textbf{0.8605} \\
D3 & 0.8413 & 0.8409 & 0.8398 & 0.8395 & 0.8407 & 0.8373 & 0.8393 & 0.8411 & \textbf{0.8418} \\
D4 & 0.8733 & 0.8730 & 0.8733 & 0.8735 & 0.8732 & 0.8729 & \textbf{0.8736} & 0.8730 & 0.8726 \\
D5 & 0.9729 & 0.9729 & 0.9730 & \textbf{0.9731} & 0.9730 & 0.9730 & 0.9729 & 0.9730 & \textbf{0.9731} \\
D6 & 0.9339 & 0.9339 & 0.9342 & 0.9341 & 0.9340 & 0.9315 & 0.9334 & \textbf{0.9346} & 0.9335 \\
D7 & 0.8357 & 0.8370 & 0.8374 & 0.8374 & 0.8374 & \textbf{0.8387} & 0.8366 & 0.8366 & 0.8357 \\
D8 & 0.9308 & 0.9313 & 0.9313 & 0.9309 & 0.9313 & 0.9283 & 0.9309 & 0.9323 & \textbf{0.9332} \\
D9 & \textbf{0.9209} & \textbf{0.9209} & 0.9204 & 0.9204 & 0.9207 & 0.9198 & 0.9205 & 0.9204 & 0.9198 \\
D10 & \textbf{0.9899} & \textbf{0.9899} & \textbf{0.9899} & \textbf{0.9899} & \textbf{0.9899} & 0.9895 & \textbf{0.9899} & \textbf{0.9899} & \textbf{0.9899} \\
D11 & 0.8806 & \textbf{0.8813} & 0.8808 & 0.8800 & \textbf{0.8813} & 0.8792 & 0.8808 & \textbf{0.8813} & 0.8802 \\
D12 & 0.8060 & 0.8063 & 0.8056 & 0.8059 & 0.8065 & 0.8051 & 0.8063 & \textbf{0.8066} & 0.8062 \\
D13 & \textbf{0.9944} & \textbf{0.9944} & 0.9943 & 0.9943 & 0.9943 & 0.9942 & \textbf{0.9944} & 0.9943 & 0.9943 \\
D14 & \textbf{0.9947} & \textbf{0.9947} & 0.9946 & 0.9946 & 0.9946 & \textbf{0.9947} & \textbf{0.9947} & 0.9946 & 0.9946 \\
D15 & 0.9121 & \textbf{0.9124} & 0.9117 & 0.9114 & 0.9119 & 0.9110 & 0.9113 & 0.9119 & 0.9117 \\
D16 & 0.9006 & 0.9004 & \textbf{0.9007} & \textbf{0.9007} & 0.9006 & 0.9002 & 0.9004 & 0.9005 & 0.9004 \\
D17 & 0.8189 & 0.8188 & 0.8189 & \textbf{0.8190} & 0.8189 & 0.8184 & 0.8188 & 0.8187 & 0.8185 \\
D18 & 0.8364 & 0.8359 & 0.8363 & 0.8362 & 0.8364 & \textbf{0.8375} & 0.8367 & 0.8363 & 0.8364 \\
D19 & 0.8655 & 0.8654 & \textbf{0.8656} & 0.8654 & 0.8653 & 0.8652 & 0.8655 & 0.8654 & 0.8645 \\
D20 & \textbf{0.8971} & 0.8968 & 0.8942 & 0.8937 & 0.8950 & 0.8947 & 0.8950 & 0.8955 & 0.8953 \\
D21 & 0.9008 & 0.9009 & \textbf{0.9013} & 0.9012 & \textbf{0.9013} & 0.9010 & 0.9012 & 0.9011 & 0.9011 \\
D22 & \textbf{0.9039} & 0.9037 & 0.9036 & 0.9033 & 0.9037 & 0.9035 & 0.9036 & 0.9035 & 0.9034 \\
D23 & 0.8200 & 0.8200 & 0.8213 & 0.8213 & 0.8200 & 0.8147 & 0.8187 & 0.8213 & \textbf{0.8233} \\
D24 & 0.4219 & 0.4244 & 0.4206 & 0.4213 & 0.4231 & \textbf{0.4313} & 0.4250 & 0.4219 & 0.4213 \\
D25 & 0.9109 & 0.9109 & 0.9109 & 0.9106 & 0.9110 & 0.9111 & 0.9106 & 0.9110 & \textbf{0.9112} \\
D26 & 0.7138 & 0.7144 & 0.7143 & 0.7134 & 0.7139 & 0.7150 & \textbf{0.7153} & 0.7134 & 0.7131 \\
D27 & 0.9452 & 0.9452 & 0.9451 & 0.9451 & 0.9452 & 0.9450 & 0.9451 & \textbf{0.9453} & \textbf{0.9453} \\
D28 & 0.9053 & 0.9055 & 0.9060 & 0.9060 & 0.9058 & 0.9051 & 0.9057 & 0.9059 & \textbf{0.9061} \\
D29 & 0.8766 & 0.8765 & 0.8760 & 0.8762 & 0.8762 & \textbf{0.8770} & 0.8762 & 0.8760 & 0.8756 \\
D30 & \textbf{0.9783} & 0.9782 & 0.9782 & \textbf{0.9783} & 0.9782 & \textbf{0.9783} & \textbf{0.9783} & 0.9781 & 0.9782 \\
D31 & 0.9165 & 0.9158 & 0.9158 & 0.9158 & 0.9159 & 0.9164 & \textbf{0.9168} & 0.9159 & 0.9155 \\
D32 & 0.6826 & 0.6834 & \textbf{0.6842} & \textbf{0.6842} & 0.6840 & 0.6815 & 0.6833 & \textbf{0.6842} & 0.6835 \\
D33 & 0.8042 & 0.8049 & 0.8075 & 0.8081 & 0.8070 & 0.7980 & 0.8036 & 0.8073 & \textbf{0.8106} \\
D34 & \textbf{0.9643} & \textbf{0.9643} & 0.9641 & 0.9638 & 0.9642 & 0.9638 & 0.9641 & 0.9642 & 0.9642 \\
D35 & \textbf{0.9986} & \textbf{0.9986} & 0.9985 & 0.9985 & 0.9985 & 0.9984 & 0.9985 & \textbf{0.9986} & \textbf{0.9986} \\
D36 & \textbf{1.0000} & \textbf{1.0000} & \textbf{1.0000} & \textbf{1.0000} & \textbf{1.0000} & \textbf{1.0000} & \textbf{1.0000} & \textbf{1.0000} & \textbf{1.0000} \\
D37 & 0.9286 & 0.9284 & 0.9293 & 0.9290 & 0.9292 & 0.9290 & 0.9290 & 0.9290 & \textbf{0.9295} \\
D38 & 0.8126 & \textbf{0.8129} & \textbf{0.8129} & 0.8128 & \textbf{0.8129} & 0.8099 & 0.8113 & 0.8128 & 0.8121 \\
D39 & 0.9962 & 0.9963 & 0.9962 & 0.9959 & 0.9962 & 0.9948 & 0.9946 & \textbf{0.9969} & \textbf{0.9969} \\
D40 & 0.8667 & \textbf{0.8677} & 0.8676 & 0.8673 & 0.8676 & 0.8660 & 0.8671 & 0.8674 & 0.8669 \\
D41 & 0.8226 & 0.8226 & 0.8227 & 0.8231 & 0.8226 & 0.8224 & \textbf{0.8235} & 0.8225 & 0.8227 \\
D42 & 0.5795 & 0.5791 & 0.5807 & 0.5807 & 0.5789 & 0.5725 & 0.5777 & 0.5805 & \textbf{0.5814} \\
\midrule
\textbf{Average} & 0.8716 & 0.8717 & 0.8717 & 0.8716 & 0.8717 & 0.8708 & 0.8715 & \textbf{0.8718} & \textbf{0.8718} \\
\bottomrule
\end{tabular*}
\end{table}

Table \ref{tab:hyperparam_analysis} shows that BPE-Entropy is broadly insensitive to moderate changes in $\lambda$ and $\delta$. On average, varying $\lambda$ within $[0.5,1.5]$ leads to only negligible differences (all averages are essentially identical, around $0.8716$--$0.8717$), indicating that the exponential mapping is not overly sensitive to its temperature-like scaling.

In contrast, $\delta$ has a slightly more noticeable (but still mild) influence: the average accuracy increases from $0.8708$ at $\delta=0.1$ to $0.8718$ at $\delta\in\{0.7,1.0\}$. This suggests that using moderate perturbations improves the fidelity of behavioral profiling, whereas overly small perturbations may not sufficiently reveal model response variation. Importantly, the default setting ($\lambda=1,\delta=0.5$) remains close to the best-performing configuration, supporting the robustness of our default hyperparameters.

\begin{table}[htbp]
\centering
\caption{Average classification accuracy obtained by Top-3 algorithms under different tolerance levels ($\alpha$) in \textbf{heterogeneous (OOF)} ensemble case. The best results in each group are highlighted in \textbf{boldface}.}
\label{tab:tolerance_comparison}
\setlength{\tabcolsep}{2pt}
\resizebox{\textwidth}{!}{
\begin{tabular}{l ccc ccc ccc ccc ccc}
\toprule
Dataset & \multicolumn{3}{c}{$\alpha=0.05$} & \multicolumn{3}{c}{$\alpha=0.1$} & \multicolumn{3}{c}{$\alpha=0.15$} & \multicolumn{3}{c}{$\alpha=0.2$} & \multicolumn{3}{c}{$\alpha=0.25$} \\
\cmidrule(lr){2-4} \cmidrule(lr){5-7} \cmidrule(lr){8-10} \cmidrule(lr){11-13} \cmidrule(lr){14-16}
& BPE & RRC & LCA & BPE & RRC & LCA & BPE & RRC & LCA & BPE & Weighte & RRC & BPE & Weighte & RRC \\
\midrule
D1 & \textbf{78.89} & \textbf{78.89} & \textbf{78.89} & 78.92 & \textbf{78.95} & 78.89 & 79.28 & \textbf{79.38} & 79.34 & \textbf{79.67} & 79.05 & 79.11 & \textbf{79.87} & 79.08 & 79.18 \\
D2 & 85.86 & 86.09 & \textbf{86.19} & 85.86 & 86.09 & \textbf{86.19} & 85.86 & 86.09 & \textbf{86.19} & 85.86 & 86.05 & \textbf{86.14} & 85.95 & \textbf{86.19} & 86.14 \\
D3 & 83.84 & 83.77 & 83.80 & 83.73 & 83.66 & 83.71 & \textbf{84.07} & 83.93 & 83.95 & \textbf{84.23} & 84.05 & 84.02 & \textbf{84.13} & 83.95 & 83.91 \\
D4 & 87.32 & 87.33 & \textbf{87.37} & 87.20 & 87.28 & \textbf{87.28} & 87.32 & \textbf{87.37} & 87.34 & 87.42 & \textbf{87.48} & 87.37 & 87.46 & \textbf{87.52} & 87.49 \\
D5 & 97.29 & \textbf{97.31} & \textbf{97.31} & 97.29 & \textbf{97.30} & \textbf{97.30} & \textbf{97.30} & \textbf{97.30} & 97.29 & 97.32 & \textbf{97.33} & 97.32 & 97.32 & \textbf{97.33} & \textbf{97.33} \\
D6 & \textbf{93.45} & 93.29 & 93.30 & \textbf{93.38} & 93.26 & 93.27 & \textbf{93.40} & 93.22 & 93.23 & \textbf{93.38} & 93.34 & 93.34 & \textbf{93.40} & 93.36 & 93.36 \\
D7 & \textbf{83.66} & 83.57 & 83.53 & \textbf{83.66} & 83.57 & 83.53 & \textbf{83.74} & 83.49 & 83.45 & \textbf{83.74} & 83.57 & 83.62 & 83.32 & 83.32 & \textbf{83.36} \\
D8 & \textbf{93.13} & 92.92 & 92.91 & \textbf{93.13} & 92.92 & 92.91 & \textbf{93.13} & 92.91 & 92.89 & \textbf{93.13} & 92.83 & 92.89 & \textbf{93.08} & 92.81 & 92.85 \\
D9 & 91.90 & \textbf{91.95} & \textbf{91.95} & 91.98 & \textbf{92.06} & 92.05 & 92.07 & 92.08 & 92.08 & \textbf{92.13} & 92.09 & 92.12 & \textbf{92.15} & 92.09 & 92.14 \\
D10 & \textbf{98.99} & \textbf{98.99} & \textbf{98.99} & \textbf{98.99} & \textbf{98.99} & \textbf{98.99} & \textbf{98.99} & \textbf{98.99} & \textbf{98.99} & 99.90 & 99.90 & \textbf{99.92} & 99.90 & 99.93 & \textbf{99.94} \\
D11 & \textbf{87.43} & 87.41 & 87.41 & \textbf{87.66} & 87.56 & 87.56 & \textbf{88.13} & 88.02 & 88.02 & 88.27 & \textbf{88.44} & 88.27 & 88.34 & \textbf{88.40} & 88.25 \\
D12 & \textbf{79.29} & 78.78 & 78.79 & 80.00 & \textbf{80.06} & \textbf{80.06} & \textbf{80.65} & 80.51 & 80.49 & 80.82 & \textbf{80.88} & 80.68 & \textbf{81.21} & 81.10 & 81.07 \\
D13 & \textbf{99.42} & \textbf{99.42} & \textbf{99.42} & \textbf{99.43} & 99.42 & 99.42 & \textbf{99.43} & 99.42 & 99.42 & \textbf{99.44} & 99.43 & 99.43 & \textbf{99.45} & 99.44 & 99.44 \\
D14 & \textbf{99.45} & 99.44 & 99.44 & \textbf{99.47} & 99.46 & 99.46 & \textbf{99.46} & \textbf{99.46} & \textbf{99.46} & \textbf{99.46} & 99.45 & \textbf{99.46} & \textbf{99.47} & 99.46 & 99.46 \\
D15 & 91.04 & 91.10 & \textbf{91.10} & 91.08 & 91.13 & \textbf{91.14} & 91.19 & \textbf{91.23} & \textbf{91.23} & \textbf{91.20} & 91.18 & \textbf{91.20} & 91.22 & \textbf{91.25} & 91.24 \\
D16 & \textbf{89.79} & 89.59 & 89.59 & \textbf{89.89} & 89.69 & 89.70 & \textbf{90.06} & 89.98 & 89.99 & 90.09 & \textbf{90.10} & 90.02 & \textbf{90.05} & \textbf{90.05} & 90.00 \\
D17 & \textbf{81.75} & 81.66 & 81.63 & \textbf{81.85} & 81.71 & 81.64 & \textbf{81.89} & 81.73 & 81.66 & \textbf{81.89} & 81.49 & 81.74 & \textbf{81.90} & 81.50 & 81.75 \\
D18 & 83.59 & \textbf{83.62} & 83.55 & \textbf{83.71} & 83.55 & 83.52 & 83.64 & \textbf{83.68} & 83.62 & 83.73 & 83.89 & \textbf{83.91} & 83.82 & 83.75 & \textbf{83.81} \\
D19 & \textbf{86.27} & 86.26 & 86.24 & \textbf{86.53} & 86.50 & 86.46 & 86.53 & \textbf{86.54} & 86.45 & 86.62 & 86.49 & \textbf{86.62} & \textbf{86.63} & 86.52 & \textbf{86.64} \\
D20 & \textbf{89.47} & 89.42 & 89.50 & \textbf{89.47} & 89.42 & 89.50 & 89.50 & 89.45 & \textbf{89.53} & \textbf{89.61} & 89.47 & 89.45 & \textbf{89.63} & 89.39 & 89.53 \\
D21 & \textbf{90.08} & 89.98 & 89.98 & \textbf{90.11} & 90.04 & 90.05 & \textbf{90.13} & 90.01 & 90.01 & \textbf{90.10} & 90.05 & 90.01 & \textbf{90.10} & 90.03 & 89.99 \\
D22 & 90.33 & \textbf{90.36} & \textbf{90.36} & 90.38 & \textbf{90.41} & 90.40 & \textbf{90.37} & 90.34 & 90.34 & 90.29 & \textbf{90.34} & 90.31 & \textbf{90.31} & \textbf{90.31} & 90.29 \\
D23 & \textbf{82.07} & 81.53 & 81.33 & \textbf{82.07} & 81.53 & 81.33 & \textbf{82.00} & 81.53 & 81.33 & \textbf{82.07} & 81.47 & 81.47 & \textbf{82.00} & 81.40 & 81.33 \\
D24 & \textbf{41.81} & 41.75 & 41.88 & \textbf{42.00} & 41.94 & \textbf{42.00} & 42.31 & \textbf{42.63} & 42.50 & 42.56 & 43.13 & \textbf{43.06} & 43.63 & \textbf{43.63} & 43.31 \\
D25 & \textbf{91.22} & 91.09 & 91.09 & \textbf{91.22} & 91.09 & 91.09 & \textbf{91.10} & 91.00 & 91.00 & \textbf{91.23} & 91.05 & 91.04 & \textbf{91.25} & 91.07 & 91.05 \\
D26 & 70.81 & \textbf{71.11} & 70.97 & \textbf{71.31} & 71.25 & 71.26 & \textbf{71.39} & 71.29 & 71.31 & \textbf{71.33} & 71.07 & 71.17 & 70.81 & 70.86 & \textbf{71.13} \\
D27 & 94.49 & \textbf{94.52} & \textbf{94.52} & 94.41 & 94.46 & \textbf{94.47} & \textbf{94.52} & \textbf{94.52} & \textbf{94.52} & \textbf{94.61} & 94.56 & 94.54 & \textbf{94.63} & 94.58 & 94.58 \\
D28 & \textbf{90.07} & 89.98 & 89.96 & \textbf{90.50} & 90.38 & 90.39 & \textbf{90.58} & 90.45 & 90.45 & \textbf{90.50} & 90.32 & 90.34 & \textbf{90.45} & 90.30 & 90.30 \\
D29 & \textbf{87.29} & 87.27 & 87.31 & \textbf{87.58} & 87.56 & 87.56 & \textbf{87.62} & 87.56 & 87.55 & \textbf{87.65} & 87.53 & 87.50 & \textbf{87.74} & 87.70 & 87.70 \\
D30 & \textbf{96.63} & 96.59 & 96.59 & \textbf{97.70} & 97.68 & 97.68 & \textbf{97.82} & \textbf{97.82} & \textbf{97.82} & 97.87 & \textbf{97.89} & \textbf{97.89} & 97.95 & \textbf{97.96} & 97.95 \\
D31 & \textbf{91.59} & 91.56 & \textbf{91.59} & \textbf{91.58} & 91.53 & 91.56 & \textbf{91.59} & 91.48 & 91.52 & \textbf{91.62} & 91.58 & 91.53 & \textbf{91.74} & \textbf{91.74} & 91.70 \\
D32 & \textbf{68.15} & 67.91 & 67.91 & \textbf{68.41} & 68.19 & 68.18 & \textbf{68.40} & 68.20 & 68.19 & \textbf{68.36} & 68.00 & 68.14 & \textbf{68.02} & 67.54 & 67.79 \\
D33 & \textbf{80.11} & 79.74 & 79.70 & \textbf{80.54} & 80.25 & 80.21 & \textbf{80.70} & 80.30 & 80.28 & \textbf{81.22} & 80.67 & 80.52 & \textbf{81.30} & 80.65 & 80.53 \\
D34 & 96.16 & \textbf{96.18} & \textbf{96.18} & 96.38 & 96.40 & \textbf{96.40} & \textbf{96.42} & \textbf{96.42} & \textbf{96.42} & 96.40 & \textbf{96.41} & \textbf{96.42} & \textbf{96.44} & \textbf{96.44} & \textbf{96.46} \\
D35 & \textbf{99.85} & 99.25 & 99.25 & \textbf{99.85} & 99.25 & 99.25 & \textbf{99.85} & 99.25 & 99.25 & \textbf{99.85} & 99.85 & 99.25 & \textbf{99.85} & 99.85 & 99.25 \\
D36 & \textbf{100.00} & \textbf{100.00} & \textbf{100.00} & \textbf{100.00} & \textbf{100.00} & \textbf{100.00} & \textbf{100.00} & \textbf{100.00} & \textbf{100.00} & \textbf{100.00} & \textbf{100.00} & \textbf{100.00} & \textbf{100.00} & \textbf{100.00} & \textbf{100.00} \\
D37 & \textbf{92.88} & 92.78 & 92.78 & \textbf{92.97} & 92.78 & 92.78 & \textbf{92.92} & 92.74 & 92.73 & \textbf{92.79} & 92.71 & 92.65 & \textbf{92.86} & 92.70 & 92.70 \\
D38 & \textbf{80.94} & 80.81 & 80.81 & 81.01 & 80.92 & 80.92 & \textbf{81.29} & 81.17 & 81.17 & \textbf{81.31} & 81.13 & 81.18 & \textbf{81.24} & 81.11 & 81.15 \\
D39 & 99.63 & \textbf{99.64} & 99.63 & 99.62 & 99.62 & 99.62 & 99.62 & 99.62 & 99.62 & 99.61 & \textbf{99.63} & \textbf{99.63} & 99.61 & 99.62 & \textbf{99.64} \\
D40 & \textbf{86.83} & 86.64 & 86.66 & \textbf{86.79} & 86.46 & 86.49 & \textbf{86.76} & 86.46 & 86.46 & \textbf{86.76} & 86.61 & 86.51 & \textbf{86.88} & 86.71 & 86.64 \\
D41 & \textbf{74.32} & \textbf{74.32} & \textbf{74.32} & \textbf{78.84} & \textbf{78.84} & 78.83 & \textbf{82.26} & 82.23 & 82.23 & \textbf{82.97} & 82.90 & 82.92 & \textbf{83.53} & 83.50 & 83.52 \\
D42 & \textbf{57.80} & 56.95 & 56.91 & \textbf{57.80} & 57.14 & 57.18 & \textbf{57.89} & 57.61 & 57.64 & \textbf{57.98} & 57.48 & 57.68 & \textbf{57.89} & 57.70 & 57.84 \\
\midrule
Average & \textbf{86.78} & 86.68 & 86.68 & \textbf{87.01} & 86.91 & 86.91 & \textbf{87.17} & 87.08 & 87.07 & \textbf{87.26} & 87.16 & 87.15 & \textbf{87.30} & 87.19 & 87.18 \\
\bottomrule
\end{tabular}
}
\end{table}

Table \ref{tab:tolerance_comparison} studies the screening tolerance $\alpha$ under the heterogeneous (OOF) setting. Across all tolerance levels, BPE remains the top (or tied-top) method on the majority of datasets, demonstrating that the proposed weighting mechanism is robust to the choice of the screening aggressiveness.

\section{Conclusions}\label{7}
The BPE framework fundamentally represents a paradigm shift. It transitions from the traditional ensemble approach, which bases integration on the performance disparity among different models, to a novel perspective that bases integration on the variation of a single model's responses relative to its own internal behavioral profile.

In this work, the proposed BPE-Entropy algorithm should be viewed as a representative baseline of this general framework, rather than an attempt to exhaustively optimize the metric. The use of negative entropy and Gaussian perturbations is deliberately chosen to isolate and evaluate the efficacy of the core principles, allowing the theoretical properties of the BPE framework to be clearly observed without the interference of complex components. The effectiveness and robustness of the BPE-Entropy algorithm, serving as a baseline implementation of the BPE framework, are substantiated in a large number of experiments on real-world datasets.

\subsection{Concluding Remarks}

Based on the experimental results supported by appropriate statistical analyses, the following remarks are drawn:

\begin{itemize}
    \item This study indicates that scrutinizing the internal output behavior of models is critical for ensemble systems.
    
    \item Supported by both theoretical intuition and experimental results, the BPE framework offers a novel perspective for approaching the theoretical performance upper bound of ensemble systems.
    
    \item Due to the promising performance of the BPE-Entropy algorithm, we believe that this work will contribute to the \textit{refinement and advancement} of ensemble learning methodologies.
\end{itemize}

\subsection{Contributions}

In summary, the main contributions of this paper are as follows:

\begin{itemize}
    \item We formalize a novel framework based on the divergence between a model's predictive behavior and its internal behavioral baseline, theoretically elucidating the core intuition behind this mechanism.
    
    \item Based on the proposed BPE framework, we develop a baseline implementation, named BPE-Entropy, and empirically validate its effectiveness through extensive experiments.
\end{itemize}

\subsection{Limitations and future works}
Consistent with the ``No Free Lunch'' theorem in machine learning, the BPE algorithm is not universally optimal across all scenarios and encounters specific challenges. In particular, as a nascent ensemble perspective, it remains in an early stage of development. Based on the intrinsic characteristics of this framework, we propose the following potential directions for future research:

\begin{itemize}
    \item \textbf{Metrics for Behavioral Profiling:} A critical question remains: what should serve as the model's behavioral profile? In this study, the BPE framework utilizes the entropy of the predicted output as the profiling metric. However, whether entropy can fully and accurately characterize a model's behavioral properties remains an open question. We also explored using the margin between the Top-1 and Top-2 prediction probabilities as an alternative profile, which outperformed entropy in certain scenarios. Consequently, identifying a metric that accurately and comprehensively describes a model's behavioral characteristics constitutes a significant problem for future investigation.

    \item \textbf{Construction of Behavioral Profiles:} Currently, we construct the behavioral profile through a simple noise injection strategy on the training set. This serves as a foundational baseline implementation, as the primary focus of this work is to highlight the shift in ensemble perspective rather than to pursue the ultimate performance limit. Therefore, developing more sophisticated methods for constructing behavioral profiles will be key to raising the performance ceiling. A distinct advantage of the BPE framework over methods like DES and Stacking is that it does not require a validation set. This characteristic allows for various data augmentation techniques on the training set without the need for ground truth labels, thereby mitigating the limitations imposed by small-sample environments compared to traditional approaches.

    \item \textbf{Integration of Perspectives:} Traditional ensemble methods focus on evaluating how well a model performs relative to \textit{other models}, whereas the BPE framework focuses on evaluating how well a model performs relative to itself. These two ideologies are not mutually exclusive but rather complementary. Thus, investigating how to effectively integrate these two perspectives is a highly valuable research direction. Preliminary experiments conducted in this study suggest that fusing the weights derived from both ideologies can indeed achieve a higher performance upper bound in many scenarios.
\end{itemize}

\bibliographystyle{elsarticle-num} 
\bibliography{refs}               

\begin{thebibliography}{10}
\expandafter\ifx\csname url\endcsname\relax
  \def\url#1{\texttt{#1}}\fi
\expandafter\ifx\csname urlprefix\endcsname\relax\def\urlprefix{URL }\fi
\expandafter\ifx\csname href\endcsname\relax
  \def\href#1#2{#2} \def\path#1{#1}\fi

\bibitem{vanderlaan2007super}
M.~J. van~der Laan, E.~C. Polley, A.~E. Hubbard, Super learner, Statistical Applications in Genetics and Molecular Biology 6~(1) (2007) Article 25.

\bibitem{timmermann2006forecast}
A.~Timmermann, Forecast combinations, in: Handbook of Economic Forecasting, Vol.~1, Elsevier, 2006, pp. 135--196.

\bibitem{krogh1995neural}
A.~Krogh, J.~Vedelsby, Neural network ensembles, cross validation, and active learning, in: Advances in Neural Information Processing Systems, 1995, pp. 231--238.

\bibitem{dietterich2000ensemble}
T.~G. Dietterich, Ensemble methods in machine learning, Multiple Classifier Systems (2000) 1--15.

\bibitem{kittler1998combining}
J.~Kittler, M.~Hatef, R.~P. Duin, J.~Matas, On combining classifiers, IEEE Transactions on Pattern Analysis and Machine Intelligence 20~(3) (1998) 226--239.

\bibitem{ho1994decision}
T.~K. Ho, J.~J. Hull, S.~N. Srihari, Decision combination in multiple classifier systems, IEEE Transactions on Pattern Analysis and Machine Intelligence 16~(1) (1994) 66--75.

\bibitem{wolpert1992stacked}
D.~H. Wolpert, Stacked generalization, Neural Networks 5~(2) (1992) 241--259.

\bibitem{cruz2018dynamic}
R.~M. Cruz, R.~Sabourin, G.~D. Cavalcanti, Dynamic classifier selection: Recent advances and perspectives, Information Fusion 41 (2018) 195--216.

\bibitem{woods1997combination}
K.~Woods, W.~P. Kegelmeyer, K.~Bowyer, Combination of multiple classifiers using local accuracy estimates, IEEE Transactions on Pattern Analysis and Machine Intelligence 19~(4) (1997) 405--410.

\bibitem{ko2008dynamic}
A.~H. Ko, R.~Sabourin, A.~S. Britto~Jr, From dynamic classifier selection to dynamic ensemble selection, Pattern Recognition 41~(5) (2008) 1718--1731.

\bibitem{gal2016dropout}
Y.~Gal, Z.~Ghahramani, Dropout as a bayesian approximation: Representing model uncertainty in deep learning, in: International Conference on Machine Learning, PMLR, 2016, pp. 1050--1059.

\bibitem{lakshminarayanan2017simple}
B.~Lakshminarayanan, A.~Pritzel, C.~Blundell, Simple and scalable predictive uncertainty estimation using deep ensembles, in: Advances in Neural Information Processing Systems, Vol.~30, 2017.

\bibitem{wolpert1997no}
D.~H. Wolpert, W.~G. Macready, No free lunch theorems for optimization, IEEE Transactions on Evolutionary Computation 1~(1) (1997) 67--82.

\bibitem{shannon1948mathematical}
C.~E. Shannon, A mathematical theory of communication, The Bell System Technical Journal 27~(3) (1948) 379--423.

\bibitem{grandvalet2004semi}
Y.~Grandvalet, Y.~Bengio, Semi-supervised learning by entropy minimization, in: Advances in Neural Information Processing Systems, Vol.~17, 2004, pp. 529--536.

\bibitem{schapire1998boosting}
R.~E. Schapire, Y.~Freund, P.~Bartlett, W.~S. Lee, Boosting the margin: A new explanation for the effectiveness of voting methods, The Annals of Statistics 26~(5) (1998) 1651--1686.

\bibitem{breiman1996bagging}
L.~Breiman, Bagging predictors, Machine Learning 24~(2) (1996) 123--140.

\bibitem{breiman2001random}
L.~Breiman, Random forests, Machine Learning 45~(1) (2001) 5--32.

\bibitem{freund1997decision}
Y.~Freund, R.~E. Schapire, A decision-theoretic generalization of on-line learning and an application to boosting, Journal of Computer and System Sciences 55~(1) (1997) 119--139.

\bibitem{friedman2001greedy}
J.~H. Friedman, Greedy function approximation: a gradient boosting machine, The Annals of Statistics (2001) 1189--1232.

\bibitem{chen2016xgboost}
T.~Chen, C.~Guestrin, Xgboost: A scalable tree boosting system, in: Proceedings of the 22nd ACM SIGKDD International Conference on Knowledge Discovery and Data Mining, 2016, pp. 785--794.

\bibitem{ke2017lightgbm}
G.~Ke, Q.~Meng, T.~Finley, T.~Wang, W.~Chen, W.~Ma, Q.~Ye, T.-Y. Liu, Lightgbm: A highly efficient gradient boosting decision tree, in: Advances in Neural Information Processing Systems, 2017, pp. 3146--3154.

\bibitem{prokhorenkova2018catboost}
L.~Prokhorenkova, G.~Gusev, A.~Vorobev, A.~V. Dorogush, A.~Gulin, Catboost: unbiased boosting with categorical features, in: Advances in Neural Information Processing Systems, 2018, pp. 6638--6648.

\bibitem{cortes1995support}
C.~Cortes, V.~Vapnik, Support-vector networks, Machine Learning 20~(3) (1995) 273--297.

\bibitem{cover1967nearest}
T.~Cover, P.~Hart, Nearest neighbor pattern classification, IEEE Transactions on Information Theory 13~(1) (1967) 21--27.

\bibitem{cox1958regression}
D.~R. Cox, The regression analysis of binary sequences, Journal of the Royal Statistical Society: Series B (Methodological) 20~(2) (1958) 215--232.

\bibitem{woloszynski2011probabilistic}
T.~Woloszynski, M.~Kurzynski, A probabilistic model of classifier competence for dynamic ensemble selection, Pattern Recognition 44~(10-11) (2011) 2656--2668.

\bibitem{giacinto2001design}
G.~Giacinto, F.~Roli, Design of effective neural network ensembles for image classification purposes, Image and Vision Computing 19~(9-10) (2001) 699--707.

\bibitem{cruz2015meta}
R.~M. Cruz, R.~Sabourin, G.~D. Cavalcanti, T.~I. Ren, Meta-des: A dynamic ensemble selection framework using meta-learning, Pattern Recognition 48~(5) (2015) 1925--1935.

\bibitem{cruz2019fire}
R.~M. Cruz, D.~V. Oliveira, G.~D. Cavalcanti, R.~Sabourin, Fire-des++: Enhanced online pruning of base classifiers for dynamic ensemble selection, Pattern Recognition 85 (2019) 149--160.

\bibitem{Beyer1999NN}
K.~Beyer, J.~Goldstein, R.~Ramakrishnan, U.~Shaft, When is ``nearest neighbor'' meaningful?, in: Proceedings of the International Conference on Database Theory (ICDT), Vol. 1540 of Lecture Notes in Computer Science, Springer, 1999, pp. 217--235.

\bibitem{hastie2009elements}
T.~Hastie, R.~Tibshirani, J.~Friedman, The Elements of Statistical Learning: Data Mining, Inference, and Prediction, Springer Science \& Business Media, 2009.

\bibitem{jaynes1957information}
E.~T. Jaynes, Information theory and statistical mechanics, Physical review 106~(4) (1957) 620.

\bibitem{vanschoren2013openml}
J.~Vanschoren, J.~N. Van~Rijn, B.~Bischl, L.~Torgo, Openml: networked science in machine learning, ACM SIGKDD Explorations Newsletter 15~(2) (2013) 49--60.

\bibitem{geurts2006extremely}
P.~Geurts, D.~Ernst, L.~Wehenkel, Extremely randomized trees, Machine Learning 63~(1) (2006) 3--42.

\bibitem{fisher1936use}
R.~A. Fisher, The use of multiple measurements in taxonomic problems, Annals of Eugenics 7~(2) (1936) 179--188.

\bibitem{rumelhart1986learning}
D.~E. Rumelhart, G.~E. Hinton, R.~J. Williams, Learning representations by back-propagating errors, Nature 323~(6088) (1986) 533--536.

\bibitem{zhang2004optimality}
H.~Zhang, The optimality of naive bayes, in: Proceedings of the Seventeenth International Florida Artificial Intelligence Research Society Conference, Vol.~2, 2004, pp. 562--567.

\bibitem{caruana2004ensemble}
R.~Caruana, A.~Niculescu-Mizil, G.~Crew, A.~Ksikes, Ensemble selection from libraries of models, in: Proceedings of the twenty-first international conference on Machine learning, ACM, 2004, p.~18.

\bibitem{martinezmunoz2006pruning}
G.~Mart{\'{i}}nez-Mu{\~{n}}oz, A.~Su{\'{a}}rez, Pruning in ordered bagging ensembles, in: Proceedings of the 23rd International Conference on Machine Learning, ACM, 2006, pp. 609--616.

\bibitem{martinezmunoz2007boosting}
G.~Mart{\'{i}}nez-Mu{\~{n}}oz, A.~Su{\'{a}}rez, Using boosting to prune bagging ensembles, Pattern Recognition Letters 28~(1) (2007) 156--165.

\bibitem{zhou2002many}
Z.-H. Zhou, J.~Wu, W.~Tang, Ensembling neural networks: many could be better than all, Artificial Intelligence 137~(1--2) (2002) 239--263.

\bibitem{clemen1989combining}
R.~T. Clemen, Combining forecasts: A review and annotated bibliography, International Journal of Forecasting 5~(4) (1989) 559--583.

\bibitem{zhang2025des}
Z.-L. Zhang, Y.-H. Zhu, X.-G. Luo, Des-as: Dynamic ensemble selection based on algorithm shapley, Pattern Recognition 157 (2025) 110899.

\bibitem{desouto2008dcs}
M.~C.~P. de~Souto, R.~G.~F. Soares, A.~Santana, A.~M.~P. Canuto, Empirical comparison of dynamic classifier selection methods based on diversity and accuracy for building ensembles, in: 2008 IEEE International Joint Conference on Neural Networks (IEEE World Congress on Computational Intelligence), IEEE, 2008, pp. 1480--1487.

\end{thebibliography}

\end{document}